\documentclass{article}

\usepackage{microtype}
\usepackage{graphicx}
\usepackage{subcaption}
\usepackage{booktabs} % for professional tables

\usepackage[preprint]{icml2026}

\usepackage{amsmath}
\usepackage{amssymb}
\usepackage{mathtools}
\usepackage{amsthm}
\mathtoolsset{showonlyrefs}

\usepackage{hyperref} % Load after mathtools so showonlyrefs can patch equation refs.

\usepackage[capitalize,noabbrev]{cleveref}
\makeatletter
\AtBeginDocument{%
  \def\label@in@display{%
    \@ifnextchar[\label@in@display@optarg\label@in@display@noarg}%
  \def\label@in@display@noarg#1{\cref@old@label@in@display{#1}}%
  \def\label@in@display@optarg[#1]#2{\cref@old@label@in@display{[#1]{#2}}}%
  \def\ltx@label#1{\cref@label{#1}}%
}
\makeatother

\theoremstyle{plain}
\newtheorem{theorem}{Theorem}[section]
\newtheorem{proposition}[theorem]{Proposition}

\theoremstyle{definition}

\theoremstyle{remark}

\usepackage[textsize=tiny]{todonotes}

\icmltitlerunning{Dirac-Frenkel-Onsager (DFO)}

\newcommand{\R}{\mathbb{R}}

\newcommand{\hatu}{\hat{u}}
\newcommand{\nt}{p} % dim theta

\newcommand{\nl}{\operatorname{null}}
\newcommand{\ns}{N}

\begin{document}

\twocolumn[
    \icmltitle{
        A Dirac-Frenkel-Onsager principle: Instantaneous residual minimization with gauge momentum for nonlinear parametrizations of PDE solutions
    }

    % It is OKAY to include author information, even for blind submissions: the
    % style file will automatically remove it for you unless you've provided
    % the [accepted] option to the icml2026 package.

    % List of affiliations: The first argument should be a (short) identifier you
    % will use later to specify author affiliations Academic affiliations
    % should list Department, University, City, Region, Country Industry
    % affiliations should list Company, City, Region, Country

    % You can specify symbols, otherwise they are numbered in order. Ideally, you
    % should not use this facility. Affiliations will be numbered in order of
    % appearance and this is the preferred way.
    \icmlsetsymbol{equal}{*}

    \begin{icmlauthorlist}
        \icmlauthor{Matteo Raviola}{epfl}
        \icmlauthor{Benjamin Peherstorfer}{nyu}
    \end{icmlauthorlist}

    \icmlaffiliation{epfl}{EPFL, Lausanne, Switzerland}
    \icmlaffiliation{nyu}{Courant Institute of Mathematical Sciences, New York University, New York, NY, USA}

    \icmlcorrespondingauthor{Matteo Raviola}{matteo.raviola@epfl.ch}
    
    % You may provide any keywords that you find helpful for describing your
    % paper; these are used to populate the "keywords" metadata in the PDF but
    % will not be shown in the document
    \icmlkeywords{Machine Learning, ICML}

    \vskip 0.3in
]

% this must go after the closing bracket ] following \twocolumn[ ...

% This command actually creates the footnote in the first column listing the
% affiliations and the copyright notice. The command takes one argument, which
% is text to display at the start of the footnote. The \icmlEqualContribution
% command is standard text for equal contribution. Remove it (just {}) if you
% do not need this facility.

% Use ONE of the following lines. DO NOT remove the command.
% If you have no special notice, KEEP empty braces:
\printAffiliationsAndNotice{}
% Or, if applicable, use the standard equal contribution text:
% \printAffiliationsAndNotice{\icmlEqualContribution}

\begin{abstract}
    Dirac-Frenkel instantaneous residual minimization evolves nonlinear parametrizations of PDE solutions in time, but ill-conditioning can render the parameter dynamics non-unique.
    We interpret this non-uniqueness as a gauge freedom: nullspace directions that leave the
    time derivative unchanged can be used to select better-conditioned parameter velocities.
    Building on Onsager's minimum-dissipation principle, we introduce a history variable---interpretable as momentum---and inject it only along the nullspace directions.
    The resulting Dirac-Frenkel-Onsager dynamics preserve instantaneous residual minimization, in contrast to standard regularization that can introduce bias, while promoting temporally smooth parameter evolutions.
    Examples demonstrate that the approach leads to increased robustness in singular and near-singular regimes.
\end{abstract}

\section{Introduction}
Neural networks and, more broadly, nonlinear parametrizations provide an expressive ansatz class for partial differential equations (PDEs), which can complement classical discretizations in, e.g., high dimensions and reduced modeling \cite{doi:10.1073/pnas.1718942115,RAISSI2019686,Du_2021,24OTDDTO}.
As in classical numerical analysis, there are two broad algorithmic paradigms. Methods following the global-in-time paradigm such as PINNs \cite{RAISSI2019686} train a neural network to represent the PDE solution over the entire space-time domain by minimizing a global objective. In contrast, methods following the local-in-time or sequential-in-time paradigm advance the neural-network parameters in time by computing parameter updates at each time step from local residual minimization, which induces an evolution on the network parameters. Analogous to global and local methods in classical numerical analysis, global- and local-in-time methods for neural networks can address complementary needs; see, e.g., \citet{24OTDDTO}.

\vspace*{-0.25cm}\paragraph{Dirac-Frenkel variational principle}
In this work, we adopt the local-in-time paradigm. Many local-in-time methods build on the Dirac-Frenkel  variational principle, which performs a Galerkin-type projection at each time step to select the time derivative of the parametrized solution so that the PDE residual is minimized instantaneously \cite{dirac_1930,Frenkel1934,Lubich2008}. The Dirac-Frenkel principle is foundational well beyond neural PDE solvers, e.g., it has a long history of being extensively used in computational chemistry \cite{MEYER199073,BECK20001} and for dynamic low-rank approximations \cite{doi:10.1137/050639703,EINKEMMER2025114191}.

\vspace*{-0.25cm}\paragraph{Challenge: non-unique parameter dynamics}
While the Dirac-Frenkel principle yields a well-defined evolution on the level of the parametrized function, it does not necessarily induce a uniquely defined  evolution of the parameters. In particular, in neural networks, distinct parameters and distinct parameter velocities may represent the same function and the same function derivative, respectively.

This ambiguity appears when the parametrization Jacobian is rank-deficient: even though the Dirac-Frenkel residual minimization condition uniquely determines the function-level time derivative, it leaves multiple admissible parameter velocities that realize it.
Even when the parameter evolution is  mathematically unique, fast spectral decay of the Jacobian can lead to ill-conditioning so that small perturbations due to finite numerical precision and other approximations can cause erratic parameter trajectories and a loss of robustness.

This non-uniquness and ill-conditioning issues of the Dirac-Frenkel dynamics on the parameter level are widely recognized  for neural networks \cite{berman2023randomized,24OTDDTO,feischl2024regularized,pmlr-v235-chen24ad} as well as for other nonlinear parametrizations \cite{KAY1989165,PhysRevE.101.023313,kvaal2023need}.

\paragraph{Our contribution: Dirac-Frenkel-Onsager principle}
We systematically address the non-uniqueness and ill-conditioning issue by interpreting the Dirac-Frenkel underdetermination of the parameter dynamics as gauge freedom. We leverage this to select parameter velocities that promote temporal smoothness while preserving the Dirac-Frenkel optimality condition of instantaneous residual minimization. 

To achieve this, we introduce an auxiliary history variable that serves as a memory of the past parameter velocities. We want the history variable to be close to a parameter velocity that reflects the present Dirac-Frenkel residual-minimizing dynamics while avoiding abrupt changes. We therefore apply Onsager's principle \cite{PhysRev.37.405} to optimally update the history variable by minimizing the mismatch with the current parameter velocity against retaining past information via a quadratic dissipation penalty on changes. The resulting history variable is a  low-pass filtered version of the past minimal-norm Dirac-Frenkel parameter velocities, providing a smooth momentum-like direction. We hence interpret the history variable as a momentum variable.

Importantly, and this is a key aspect of our approach, we use the momentum variable to only fix the Dirac-Frenkel gauge. In particular, this means that the Dirac-Frenkel-determined evolution in function space that guarantees instantaneous residual minimization is unchanged and the momentum is used only along the gauge directions, i.e., the directions in the nullspace of the parametrization Jacobian. By injecting the momentum only in the nullspace directions, we select a unique parameter velocity while preserving the instantaneous residual-minimization property of Dirac-Frenkel (Galerkin optimality). This yields the proposed Dirac-Frenkel-Onsager dynamics.

\paragraph{Literature review}
There is a wide range of neural PDE solver approaches building on the Dirac-Frenkel variational principle, including Neural Galerkin schemes \cite{BPE22NG}, evolutional neural networks \cite{PhysRevE.104.045303}, and many others \cite{anderson2021evolution,berman2023randomized,finzi_stable_2023,berman2024colora,feischl2024regularized}. Another perspective is to view the dynamics as natural gradient descent that are applied on an energy; see, e.g.,  \citet{wang2021particle,doi:10.1137/22M1529427,dahmen2025expansivenaturalneuralgradient} in the context of neural networks. Independent of the specific instance of how the Dirac-Frenkel variational principle is used, it suffers from the non-uniqueness and ill-conditioning issues described above. This is also noted in, e.g., \citet{finzi_stable_2023,feischl2024regularized,BUCHFINK2024134299,24OTDDTO}.

The classical approach to cope with the non-uniqueness of the parameter dynamics is to add a regularizer, which has been systematically analyzed for the first time in \cite{feischl2024regularized}. However, Tikhonov regularization can introduce a bias which ought to be controlled via a residual-depending regularization parameter to avoid incurring a large error. On the other hand, the works \cite{PhysRevLett.107.070601,doi:10.1137/050639703} fix the gauge with orthogonality conditions that are specific to matrix-factorization and tensor parameterizations, which are not applicable to generic nonlinear parametrizations such as neural networks. Another remedy that has been proposed is randomization \cite{berman2023randomized}. While this can lead to better conditioned problems \cite{dong2025randomizedtimesteppingnonlinearly}, it is not clear what the resulting time-continuous dynamics are. \citet{finzi_stable_2023} propose restarts, that is periodically re-training the neural network from scratch during time integration, which can lead to better minimal-norm parameter velocities; however, these can add a non-negligible cost overhead and it is unclear when exactly to perform the restarts. There is also a line of work on instantaneous residual minimization that avoids the Dirac-Frenkel varaitional principle altogether, for example via discretize-then-optimize and related formulations  \cite{pmlr-v202-chen23af,24OTDDTO,pmlr-v235-chen24ad,kvaal2023need}. However, the resulting optimization problems to be solved at each time step can be more challenging to solve \cite{24OTDDTO}.

The Dirac-Frenkel dynamics are analogous to natural gradient descent \cite{6790500} when applied to gradient flows. In the context of neural PDE solvers, natural gradient descent with momentum is used for training PINNs in, e.g., \cite{pmlr-v202-muller23b,goldshlager2024kaczmarz,guzman-cordero2025improving}. Crucially, the dynamics in this case represent optimization and the previous optimization update is used as momentum rather than a moving average of the intermediate minimal-norm solutions.

\paragraph{Summary of contributions}
(a) Remove a core failure mode of Dirac-Frenkel dynamics in ill-conditioned regimes while maintaining residual minimization, avoiding the bias that can be introduced by standard regularization.

(b) Address the ill-conditioning in a principled  way by recasting it as gauge freedom and fixing the gauge via Onsager momentum injections along gauge directions.

(c) Improve practical reliability of neural PDE solvers by producing smoother, less erratic parameter velocities.

\begin{figure*}
    \centering
    \includegraphics[width=1.0\textwidth]{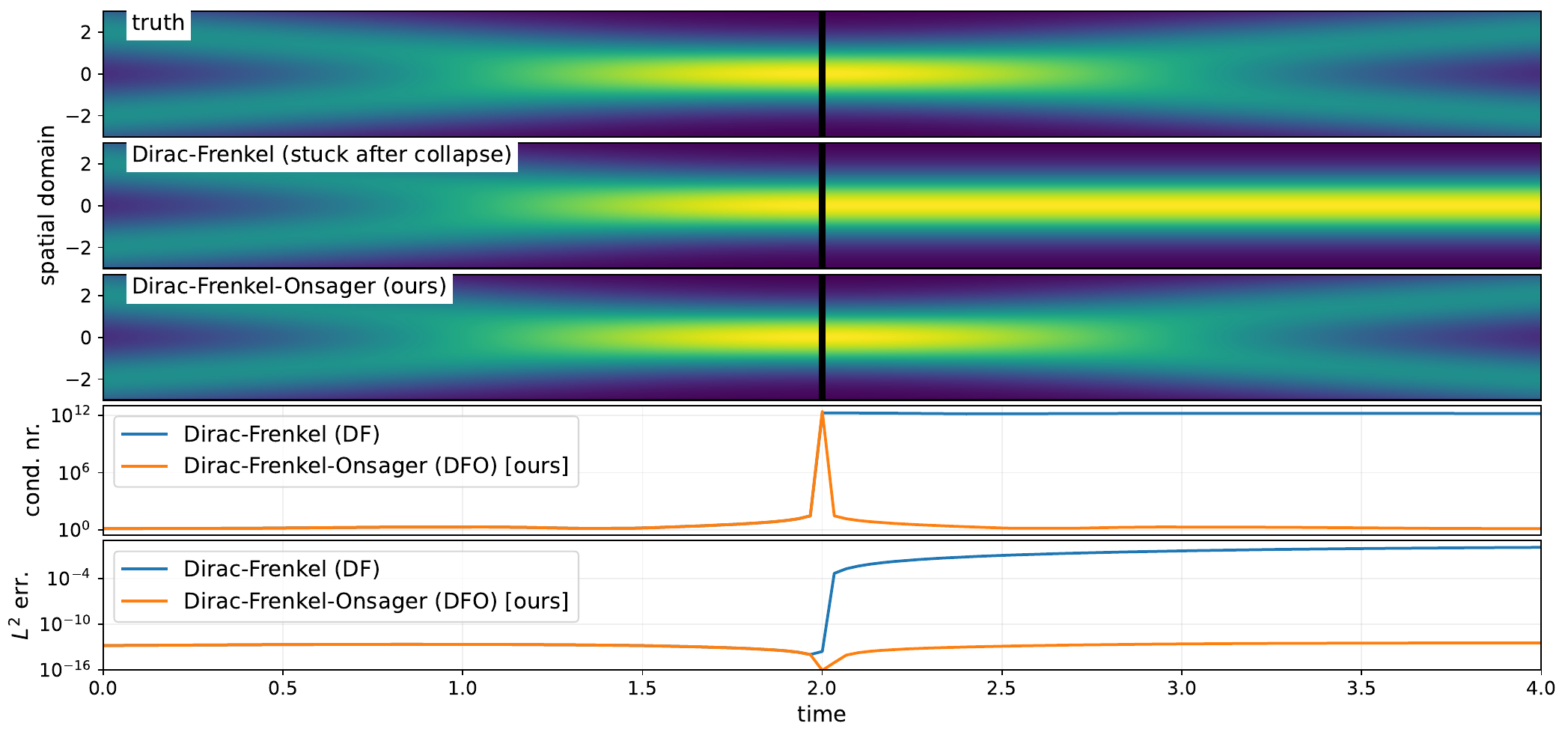}\vspace*{-0.25cm}
    \caption{At wave collision ($t = 2$), the tangent space collapses and so Dirac-Frenkel (even with minimal-norm regularization) yields a parameter velocity that keeps the waves locked together. In contrast, the proposed Dirac-Frenkel-Onsager principle keeps injecting momentum in the nullspace direction at rank loss and so the dynamics can escape the collapse.}
    \label{fig:rankloss_collapse_vs_momentum}
\end{figure*}

\section{Dirac-Frenkel variational principle}

\subsection{Instantaneous residual minimization}
Let $\Omega \subseteq \mathbb{R}^d$ be the spatial domain and $[0,T]$ the time interval. Consider the PDE
\begin{equation}\label{eq:PDE}
    \partial_t u(t, x) = \mathcal{F}(u(t, \cdot))
\end{equation}
with the right-hand side operator $\mathcal{F}$ that can contain partial derivatives and initial $u_0: \Omega \to \mathbb{R}$ and boundary conditions so that the problem is well posed over a suitable Hilbert space $\mathcal{U}$. We aim to approximate the solution function $u: [0, T] \times \Omega \to \mathbb{R}$, which lies in $u(t, \cdot) \in \mathcal{U}$ at all times $t$.

We parametrize the solution function as $\hatu(\theta(t), \cdot):\Omega\to\R$ with a time-dependent parameter vector $\theta(t) \in\R^{\nt}$ and assume that $\hat{u}$ is continuously differentiable in $\theta(t)$. In particular, the parametrization induces the trial set $\mathcal{M} = \{\hat{u}(\theta, \cdot) \,|\, \theta \in \mathbb{R}^{\nt}\}$, which is often referred to as trial manifold.  The approximation $\hat{u}(\theta(t), \cdot)$ evolves in time through the time dependence of the parameter $\theta(t)$. Differentiating with respect to $t$ and applying the chain rule leads to
\begin{equation}\label{eq:DF:PartialTU}
    \partial_t \hat{u}(\theta(t), \cdot) = \nabla_{\theta} \hat{u}(\theta(t), \cdot)^{\top}\eta(t)\,,
\end{equation}
where $\eta(t)$ is the parameter velocity and can be interpreted as the time derivative $\dot{\theta}(t)$ of the parameter vector $\theta(t)$. In particular, the time derivative \eqref{eq:DF:PartialTU} is an element of the  space spanned by the component functions of the gradient
\begin{equation}\label{eq:TangentSpace}
    \mathcal{T}_{\hat{u}(\theta(t), \cdot)}\mathcal{M} = \operatorname{span}\{\partial_{\theta_1}\hat{u}(\theta(t), \cdot), \dots, \partial_{\theta_{\nt}}\hat{u}(\theta(t), \cdot)\}\,,
\end{equation}
which coincides with the tangent space of $\mathcal{M}$ at $\hatu(\theta(t), \cdot)
$ under certain regularity conditions.

Among all possible velocities at time $t$, the Dirac-Frenkel variational principle selects the residual-minimizing one
\begin{equation}\label{eq:LSF}
    \partial_t \hatu(\theta(t),\cdot) = \operatorname*{arg\,min}_{v \in \mathcal{T}_{\hatu(\theta(t),\cdot)} \mathcal{M}} \|v - \mathcal{F}(\hat{u}(\theta(t), \cdot))\|^2\,,
\end{equation}
where $\|\cdot\|$ denotes the $L^2$ norm over $\Omega$ in the following; see also  \citet{10.1093/imanum/draf073}.
At the parameter level, this translates to the least-squares problem
\begin{equation}\label{eq:LS}
    \dot{\theta}(t) \in \operatorname*{arg\,min}_{\eta \in \mathbb{R}^{\nt}} \|\nabla_{\theta}\hat{u}(\theta, \cdot)^{\top} \eta - \mathcal{F}(\hat{u}(\theta(t), \cdot))\|^2\,.
\end{equation}
The advantage over standard Galerkin optimality is that the Dirac-Frenkel variational principle is also applicable to parametrizations with a nonlinear parameter dependence (e.g., neural networks).

\subsection{Conditioning and tangent space collapse} \label{sec:Prelim:ToyExamples}
A major issue is that the induced parameter least-squares problem \eqref{eq:LS} may have a non-unique solution or be ill-conditioned, even though Dirac-Frenkel uniquely determines the residual-minimizing velocity at the function level via \eqref{eq:LSF}.

\paragraph{Tangent space collapse}
In the extreme case, the tangent map $\eta \mapsto \nabla_{\theta}\hat{u}(\theta, \cdot)^{\top}\eta$ has a non-trivial kernel. Equivalently, the tangent space has dimension strictly smaller than $p$, and we refer to this phenomenon as tangent space collapse \cite{24OTDDTO}.
Then the least-squares problem \eqref{eq:LS} has infinitely many minimizers: different parameter velocities $\eta$ induce the same tangent function. Choosing a particular representative, e.g., the minimum $\ell^2$-norm solution, enforces a unique parameter velocity but leads to dynamics that can be qualitatively wrong or even completely off as the following example demonstrates (see Figure~\ref{fig:rankloss_collapse_vs_momentum}).
Even when tangent map is injective, it can be poorly conditioned (e.g., in the $L^2$ operator norm), meaning that small perturbations can produce large changes in the inferred parameter velocity. In finite precision, where time integration typically requires truncating small singular values, ill-conditioning manifests numerically as a practical tangent space collapse, which motivates methods that explicitly control how the dynamics behave along (exact or numerical) nullspace directions.

\paragraph{Example: Colliding waves}
Consider the wave equation $\partial^2_t u = c^2 \partial_x^2 u$ with $c = 1$ and initial condition $u(0, x) = \phi_{0}(x;-2) + \phi_{\rho}(x;2)$, where $\phi_{\rho}(x;\mu) = \exp(-\frac{1}{2}(x-\mu)^2/(1 + \rho))$ with an offset $\rho \geq 0$. The dynamics are visualized in Figure~\ref{fig:rankloss_collapse_vs_momentum} for $\rho = 0$: the two waves given by $\phi_0(x; -2)$ and $\phi_0(x; 2)$ at initial position $x = -2$ and $x = 2$, respectively, approach each other, collide at spatial location $x = 0$ and time $t = 2$, and then separate and move away from each other again.

We now demonstrate on this example analytically that applying Dirac-Frenkel with minimal-norm regularization directly can lead to dynamics which incur a large error because of the tangent space collapse. We bring the wave equation into first-order form \eqref{eq:PDE} and consider the vector-valued parametrization $\hat{u}(\theta(t), x) = [\hat{u}^{(1)}(\theta(t), x), \hat{u}^{(2)}(\theta(t), x)]^{\top}$ with $\theta(t) = [\theta_1(t), \dots, \theta_4(t)]^{\top} \in \mathbb{R}^4$ and components
\begin{equation}\label{eq:ToyExample:ParamWave}
    \begin{aligned}
        \hat{u}^{(1)}(\theta(t), x) = & \phi_0(x; \theta_1(t)) + \phi_{\rho}(x; \theta_2(t)),                               \\
        \hat{u}^{(2)}(\theta(t), x) = & c\partial_{\mu}\phi_0(x; \theta_3(t)) - c\partial_{\mu}\phi_{\rho}(x; \theta_4(t)).
    \end{aligned}
\end{equation}
As long as $\theta_1(t) \neq \theta_2(t)$, the space \eqref{eq:TangentSpace} spanned by the component functions of the gradient is four dimensional and Dirac-Frenkel leads to a unique parameter velocity $\dot{\theta}(t)$. However, when the waves collide $t = 2$, we have $\theta_1(t) = \theta_2(t)$ and the tangent space collapses to dimension two. In particular, to separate the two waves at $t > 2$, the parameter velocity needs to be aligned with  the direction $[1, -1, 0, 0]^{\top}$, which is in the orthogonal complement of the collapsed tangent space. Thus, following the minimal-norm Dirac-Frenkel dynamics can never lead to a separation of the waves and the waves are stuck for all $t > 2$, as shown in Figure~\ref{fig:rankloss_collapse_vs_momentum}. Analogously, regularization with Tikhonov and truncated SVD also prevents motions in the orthogonal complement of the collapsed space. Consequentially, the two waves stick together rather than cross, leading to the collapse of even the regularized Dirac-Frenkel dynamics. See \Cref{appendix:WaveToy} for additional details on this problem.

\section{Leveraging the gauge freedom}
\subsection{Gauge freedom of Dirac-Frenkel dynamics}
We propose to treat the non-uniqueness of the parameter velocity as a gauge freedom of the Dirac-Frenkel dynamics.
To make this gauge freedom precise, it is convenient to work with the normal equations
\begin{equation}\label{eq:GNormal}
    G(\theta(t))\eta(t) = g(\theta(t))
\end{equation}
of the least-squares problem \eqref{eq:LS}, with the matrix $G(\theta(t)) \in \mathbb{R}^{\nt \times \nt}$ and the right-hand side vector $g(\theta(t)) \in \mathbb{R}^{\nt}$ defined as ($i, j = 1, \dots, \nt$)
\begin{align}
    G_{ij}(\theta(t)) = & \langle \partial_{\theta_i} \hat{u}(\theta(t), \cdot), \partial_{\theta_j} \hat{u}(\theta(t), \cdot) \rangle\,, \\
    g_i(\theta(t)) =    & \langle \partial_{\theta_i} \hat{u}(\theta(t), \cdot), \mathcal{F}(\hat{u}(\theta(t), \cdot))\rangle\,,
\end{align}
where $\langle\cdot, \cdot \rangle$ denotes the $L^2$ inner product on $\Omega$.
The set of all solutions of the normal equations \eqref{eq:GNormal}, and thus of the least-squares problem \eqref{eq:LS}, is an affine space. Let $\bar{\eta}(\theta(t)) \in \mathbb{R}^{\nt}$ be a reference solution of \eqref{eq:LS}, which will be the minimal-norm solution in the following, but any other reference solution can be used. All parameter velocities $\eta(t)$ that are compatible with the Dirac-Frenkel dynamics, in the sense that they solve \eqref{eq:LS}, can be represented as
\begin{equation}\label{eq:AllPossibleSolutions}
    \eta(t) = \bar{\eta}(\theta(t)) + w\,,\qquad w \in \nl(G(\theta(t)))\,,
\end{equation}
where $\nl(G(\theta))$ denotes the nullspace of the matrix $G(\theta(t))$. Notice that the latter is the same space as the kernel of the linear map $\eta \mapsto \nabla_{\theta}\hat{u}(\theta(t), \cdot)^{\top}\eta$ induced by the gradient of $\hat{u}$ at $\theta(t)$.

We refer to this freedom of adding vectors $w$ from the nullspace $\nl(G(\theta(t)))$ to any parameter velocity $\bar{\eta}(\theta(t))$ as gauge freedom of the Dirac-Frenkel dynamics. In particular, the nullspace captures parameter velocity directions that are not violating the instantaneous residual minimization principle given by Dirac-Frenkel and thus are not changing the time derivative $\partial_t \hat{u}$ given in \eqref{eq:DF:PartialTU}.

\subsection{Gauge fixing for Dirac-Frenkel dynamics}
We propose to fix the gauge of Dirac-Frenkel dynamics. This means we maintain the instantaneous residual minimization property of Dirac-Frenkel dynamics while selecting one specific parameter velocity $\dot{\theta}(t)$ in the set of all possible solutions given by \eqref{eq:AllPossibleSolutions}.

One approach to fixing the gauge freedom is via an objective $\Phi: \mathbb{R}^n \times \mathbb{R}^n \to \mathbb{R}$ and an optimization problem
\begin{equation}\label{eq:FixingTheGauge}
    \dot{\theta}(t) = \bar{\eta}(\theta(t)) + \operatorname*{arg\,\min}_{w \in \nl(G(\theta(t)))} \Phi(\bar{\eta}(\theta(t)), w)\,,
\end{equation}
where $\Phi$ selects one $w$ out of all possible choices in the nullspace  $\nl(G(\theta(t)))$ (recall that $\bar{\eta}(\theta(t))$ is the reference velocity).
We stress that  \eqref{eq:FixingTheGauge} applies changes to the residual-minimizing velocity $\bar{\eta}(\theta(t))$ only in the directions of the nullspace, so that $\dot{\theta}(t)$ is also a solution of the Dirac-Frenkel least-squares problem \eqref{eq:LS} and thus an instantaneous residual minimizing velocity. This is in contrast to standard regularization that applies the regularizer to all components (within and outside of the nullspace) so that the resulting velocity is not necessarily residual-minimizing.
Instead, we advocate for maintaining the residual-minimization property and using the gauge freedom to pick one parameter velocity that has favorable properties.
The next section discusses one specific objective, but we remark that equation \eqref{eq:FixingTheGauge} should be understood as a flexible gauge-fixing template rather than the specific Onsager-history objective function introduced below.
        Other choices of $\Phi$, including potentially history-free gauge-fixing rules, are possible and are an interesting direction for future work.

\paragraph{First versus higher-order changes} We note that the Jacobian nullspace characterizes directions that do not change the represented function to first order; for nonlinear parametrizations, motion in these directions can still alter the function through higher-order effects. Consequently, for finite step sizes, nullspace injection can introduce higher-order drift that is not controlled by the instantaneous Dirac-Frenkel optimality condition---indeed, this is in agreement with the overall Dirac-Frenkel principle that constrains only the first-order (tangent-space) residual minimization and imposes no variational control over higher-order effects induced by parameter motion.

\section{Dirac-Frenkel-Onsager dynamics}

\subsection{Onsager's principle}
We introduce a history variable $m(t)\in\R^{\nt}$, which can be interpreted as a proposed velocity based on previous parameter velocities.
Define the tracking energy
\begin{equation}
    \mathcal{E}\bigl(m,t) = \frac12\bigl\|m - \bar{\eta}(\theta(t))\bigr\|_2^2,
\end{equation}
which penalizies mismatch between $m(t)$ and the reference $\bar{\eta}(\theta(t))$, and define a quadratic dissipation potential $\Psi(v) = \frac{\tau}{2}\,\|v\|_2^2$, $\tau>0$.
We then apply Onsager's minimum-dissipation principle \cite{PhysRev.37.405} to $m(t)$ as
\begin{equation}\label{eq:OnsagerOptiProb}
    \dot m(t)\in\arg\min_{\phi\in\R^{\nt}}\Bigl\{\Psi(\phi)+\langle\nabla_m \mathcal{E}(m(t),t),\,\phi\rangle\Bigr\},
\end{equation}
so that the stationary condition of \eqref{eq:OnsagerOptiProb} yields the linear filter
\begin{equation}\label{eq:OnsageEq}
    \tau\,\dot m(t) = - \nabla_m \mathcal{E}(m(t),t) = \bar{\eta}(\theta(t)) - m(t).
\end{equation}

The explicit representation of the filter is an exponentially weighted history of $\bar{\eta}(\theta(t))$,
\begin{equation}\label{eq:MomentumEqAsFilter}
    m(t) = \mathrm e^{-\frac{t-t_0}{\tau}}\,m(t_0) + \frac{1}{\tau}\int_{t_0}^{t} e^{-\frac{t-s}{\tau}}\,\bar{\eta}(\theta(s))\mathrm ds,
\end{equation}
which reveals how the whole history of the reference $\bar \eta$ is included in $m$. In particular, when $\bar{\eta}(\theta(t))=\bar{\eta}$ is fixed, then the mismatch energy decays monotonically since $\frac{d}{dt}\,\mathcal{E}(m(t),t) = -\frac{1}{\tau}\|m(t)-\bar{\eta}\|^2_2 \le 0$, which demonstrates the inertia-like effect of the variable $m(t)$ and that $m(t)$ is a relaxational memory mechanism towards $\bar{\eta}$.

\subsection{Gauge fixing with Onsager dynamics}
\paragraph{Momentum in orthogonal complement}
Given the reference parameter velocity  $\bar{\eta}(\theta(t))$ (in our case the minimal-norm solution to \eqref{eq:LS}), we require the parameter velocity $\dot{\theta}(t)$ to match $m(t)$ imposing
\begin{equation}\label{eq:GaugeMomGeneric}
    \dot{\theta}(t) = \bar{\eta}(\theta(t)) + \operatorname*{arg\,\min}_{w \in \nl(G(\theta(t)))} - \langle m(t), w \rangle + \frac{1}{2 \lambda} \|w\|_2^2\,,
\end{equation}
where $m(t)$ is given by the Onsager equation \eqref{eq:OnsageEq} and $\lambda > 0$ is a regularization parameter that controls how much momentum is added along the nullspace directions.

We can write down the dynamics of $\theta(t)$ and $m(t)$ directly. Let $P(\theta) \in \mathbb{R}^{\nt \times \nt}$ be the projector onto the nullspace $\nl(G(\theta))$, then the solution $\dot{\theta}(t)$ of \eqref{eq:GaugeMomGeneric} is
\begin{equation}\label{eq:GaugeFreedomFixedDotTheta}
    \dot{\theta}(t) = \bar{\eta}(\theta(t)) + \lambda P(\theta(t)) m(t).
\end{equation}
Combining \eqref{eq:GaugeFreedomFixedDotTheta} with the Onsager dynamics given in \eqref{eq:OnsageEq}, we obtain
\begin{align}
    \dot m(t)     & = \frac{1}{\tau}(\bar{\eta}(\theta(t)) -m(t)), \label{eq:DFO:MinNormRefMomEq}        \\
    \dot\theta(t) & = \bar{\eta}(\theta(t)) + \lambda P(\theta(t))m(t). \label{eq:DFO:MinNormRefThetaEq}
\end{align}

\paragraph{Discussion}
In the situation that the Gram matrix $G(\theta(t))$ is of low rank (``tangent space collapse''), the nullspace  $\nl(G(\theta(t))$ enlarges, and so the projection term $P(\theta(t))m(t)$ becomes the mechanism that injects motion along newly available nullspace (gauge) directions. Additionally, in case of near collapse (i.e., poor conditioning of $G(\theta)$), truncating small singular values and injecting the momentum $m(t)$ in the resulting approximated nullspace acts as a low-pass filter that can prevent jumps in these directions that can be caused by numerical artifacts due to poor conditioning.

Let us also comment on how the use of momentum in our setting fundamentally differs from optimization approaches. For an optimization task, only the final/best iterate is of interest. In our case, we approximate a dynamical system which must be matched at all times $t$. This means that a naive application of momentum fails as it does not respect the instantaneous-minimization principle.

\subsection{Time discretization}\label{sec:DFO:TimeDisc}
Let us now discretize the Dirac-Frenkel-Onsager dynamics with an Euler method for ease of exposition; other time-integration schemes can be applied as in Appendix~\ref{appendix:TimeDisc}.

Consider the time domain $[0, T]$ with $0 = t_0 < t_1 < \dots < t_K = T$ with equidistant time-step size $\delta t > 0$. We denote the reference velocity at time $t_k$ as $\bar{\eta}_k = \bar{\eta}(\theta_k) \in \mathbb{R}^{\nt}$ and the corresponding momentum variable as $m_k \in \mathbb{R}^{\nt}$. For the Onsager evolution \eqref{eq:OnsageEq}, we use backward (implicit) time stepping because it preserves the qualitative relaxation property of the Onsager filter for any time-step size $\delta t > 0$, whereas explicit methods would lead to a step-size restriction.
For the parameter evolution \eqref{eq:DFO:MinNormRefThetaEq} we instead employ forward (explicit) Euler, so that the combined scheme is semi-implicit.
With these choices, we can exploit the Onsager evolution special structure in $m(t)$ to obtain backward Euler steps which require no solve and thus implicit time integration can be applied efficiently.
More precisely, we have
\[
    \tau \frac{m_{k + 1} - m_k}{\delta t} = \bar{\eta}_k - m_{k + 1}\,,
\]
which is equivalent to the exponential moving average
\[
    m_{k+1} = \beta\,m_k + (1-\beta)\,\bar{\eta}_k,
    \qquad
    \beta = \frac{\tau}{\tau+\delta t}.
\]
Note that $\beta \in (0, 1)$ for any $\delta t > 0$, so $m_{k + 1}$ is always a convex combination of $m_k$ and $\bar{\eta}_k$. For the parameter equaiton, the time discretization yields
\[
    \frac{\theta_{k + 1} - \theta_k}{\delta t} =  \bar{\eta}_k + \lambda  P(\theta_k) m_{k + 1},
\]
with the minimal-norm solution $\bar{\eta}_k = G(\theta_k)^{+}g(\theta_k)$. Notice that backward in time would lead to a nonlinear problem at each time step.

Putting this together, we obtain the time-discrete dynamics with the minimal-norm parameter velocity as reference
\begin{equation}\label{eq:DFOTimeDiscFwdBwdEuler}
    \begin{aligned}
        \bar{\eta}_k   & = G(\theta_k)^{+}g(\theta_k),                                         \\
        m_{k + 1}      & = \beta m_k + (1 - \beta)\bar{\eta}_k,                                \\
        \theta_{k + 1} & = \theta_k + \delta t(\bar{\eta}_k + \lambda  P(\theta_k) m_{k + 1}).
    \end{aligned}
\end{equation}

\subsection{Collocation and sampling in space} \label{ssec:cspace-disc}
We introduce a set of collocation points $\{x_i\}_{i=1}^{\ns} \subset \Omega$. The collocation points can be uniformly sub-sampled from a grid in the spatial domain $\Omega$ in low dimensions, or sampled in some other way for high-dimensional problems \cite{wen2023coupling}. In particular, the collocation points can change over the time step, which we disregard for notational ease.

We define the batch gradient $J(\theta)~\in~\R^{\ns \times \nt}$ of $\hat{u}(\theta, \cdot)$, which we hereafter refer to as Jacobian, and the batch right-hand side $f(\theta) \in \R^{\ns}$ as
\begin{equation}\label{eq:batch-grad}
    J_{ij}(\theta) = \partial_{\theta_j}\hat{u}(\theta, x_i)\,,\qquad f_i(\theta) = \mathcal{F}(\hat{u}(\theta, \cdot))(x_i)
\end{equation}
for $i = 1, \dots, \ns$ and $j = 1, \dots, \nt$.
An approximation of the minimal-norm solution of the Dirac-Frenkel least-squares problem \eqref{eq:LS} can then be computed via an SVD of the batch gradient $J(\theta)$: Let $J(\theta) = U\Sigma V^{\top}$ be the SVD of $J(\theta)$ and let $U_{\epsilon} \Sigma_{\epsilon} V^{\top}_{\epsilon}$ be the one with all singular values greater than a tolerance $\epsilon > 0$ truncated (tSVD). The minimal-norm solution up to the tolerance $\epsilon$ is then formally given by $\bar{\eta} = V_{\epsilon}\Sigma_{\epsilon}^{-1}U_{\epsilon}^{\top} f(\theta)$.
Furthermore, the action of the projector $P_{\epsilon}(\theta)$ can be computed from the same SVD as $P_{\epsilon}(\theta)z = z - V_{\epsilon}V_{\epsilon}^{\top}z$. We note that a randomized SVD can be used; see Appendix~\ref{appendix:Random}.

\subsection{Algorithm and costs}
In \Cref{alg:dfo} we describe the DFO steps with the semi-implicit Euler scheme introduced in \Cref{sec:DFO:TimeDisc}.
Notice how injecting momentum requires only two extra matrix-vector multiplies compared to DF with an SVD-based least-squares solver, which amounts to a few extra lines of code.
The dominant cost is the computation of the tSVD which scales like $O(N p^2)$.
If we use randomized tSVD with sketch size $s$, the cost can be reduced to $O(N s^2)$ (see Appendix~\ref{appendix:Random}).

\begin{algorithm}[tb]
    \caption{Dirac-Frenkel-Onsager dynamics}
    \label{alg:dfo}
    \begin{algorithmic}
        \STATE Fit parameterization $\hat{u}(\theta, \cdot)$ to initial condition $u_0$ to obtain $\theta_0$ and set $m_0=0$
        \FOR{$k=1$ {\bfseries to} $K$}
        \STATE Assemble $J_k=J(\theta_k)$ and $f_k=f(\theta_k)$ defined in \eqref{eq:batch-grad}
        \STATE Compute $U_{\epsilon}, \Sigma_{\epsilon}, V_{\epsilon}^{\top} = \mathrm{tSVD}(J_k, \epsilon)$
        \STATE Compute minimal-norm solution $\bar{\eta}_k$ of \eqref{eq:LS} to tol.\ $\epsilon$ % up to tolerance $\epsilon$
        \STATE Update momentum $m_{k+1} = \beta m_k + (1-\beta)\bar{\eta}_{k}$
        \STATE Project momentum $m^{\perp}_{k+1}= m_{k+1} - V_\epsilon (V_\epsilon^{\top} m_{k+1})$
        \STATE Update parameters $\theta_{k+1} = \theta_k + \delta t \left(\bar{\eta}_k + \lambda m^{\perp}_{k+1}\right)$
        \ENDFOR
    \end{algorithmic}
\end{algorithm}
\begin{figure*}
    \centering
    \begin{tabular}{ccc}
        \includegraphics[width=0.57\columnwidth]{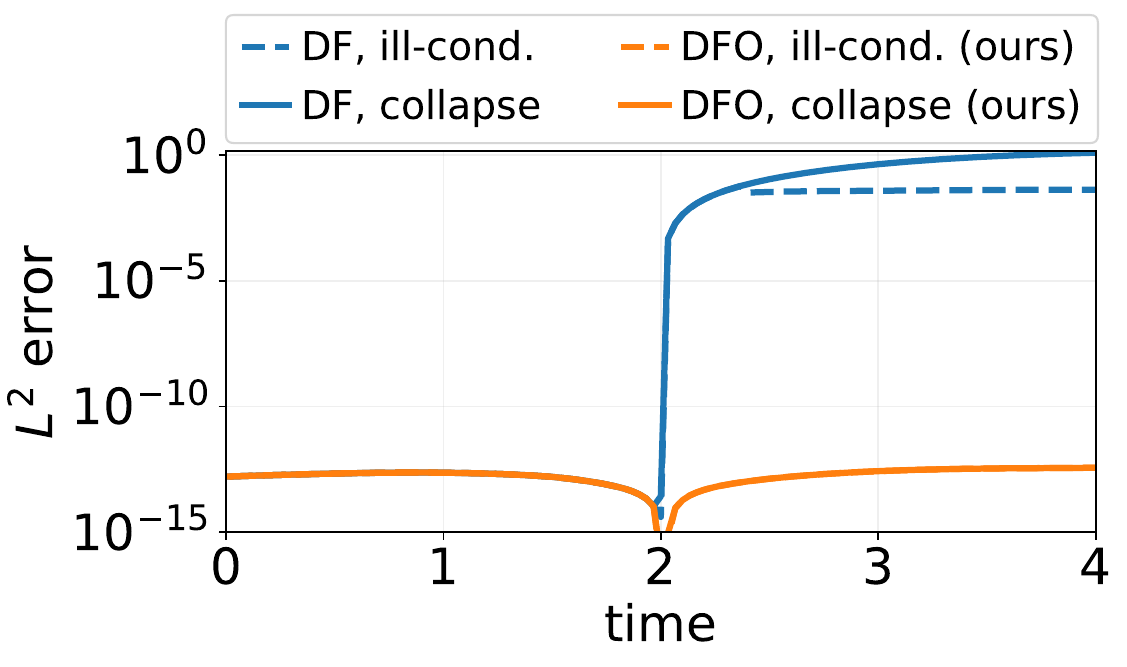} &
        \includegraphics[width=0.855\columnwidth]{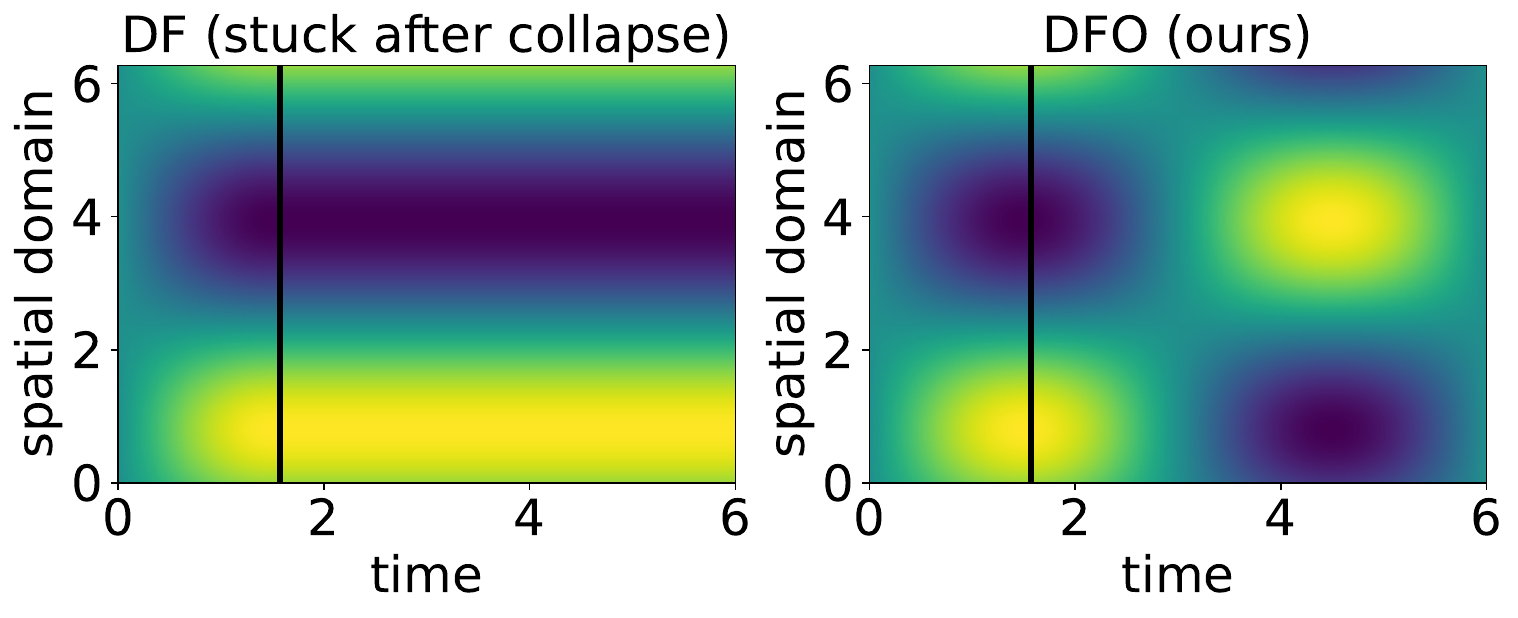} \includegraphics[width=0.57\columnwidth]{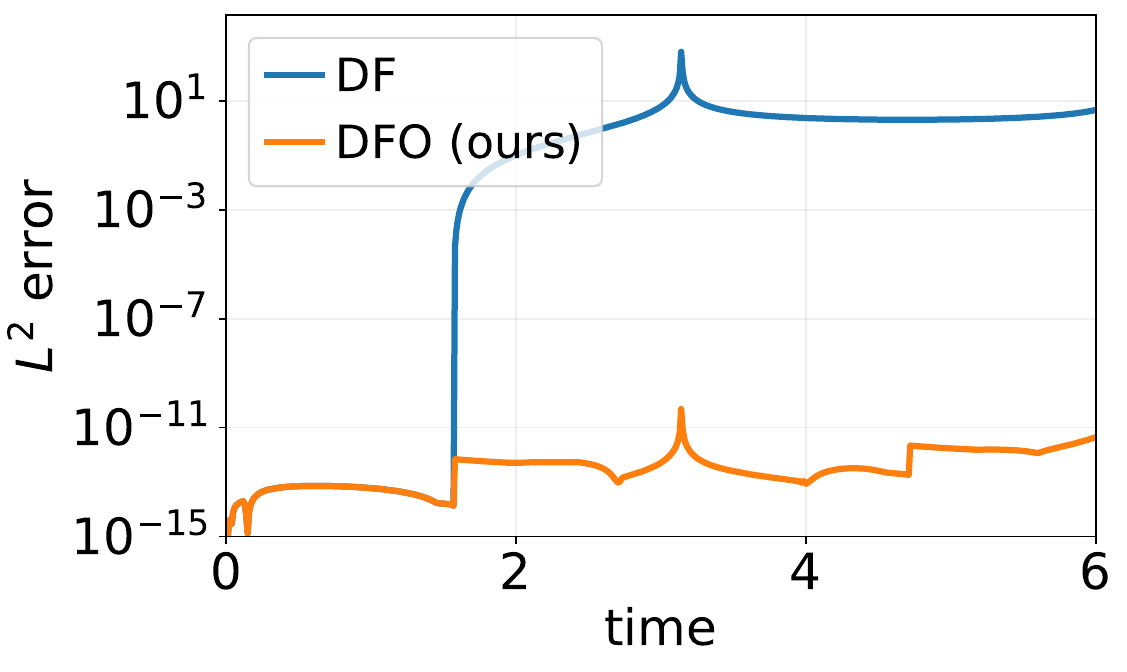}\vspace*{-0.25cm} \\
        \small (a) wave problem                                                                                                                                                                 & \small (b) advection-reaction problem &
    \end{tabular}
    \caption{Plot (a) shows that DFO achieves orders of magnitude lower errors than DF in a severely ill-conditioned wave problem without exact collapse. Plot (b) shows that for an advection-reaction problem with repeated tangent-space collapse points at $t \in \{\pi/2, \pi, 3\pi/2\}$, the DFO momentum injects nullspace parameter velocities so that the full-dimensional tangent space is restored after collapse and parameter dynamics can continue to evolve, while DF remains stuck after the first collapse point.}
    \label{fig:ODECasePlot}
\end{figure*}

\section{Numerical experiments}
We demonstrate our method on various low and high dimensional examples. In this section, we use DF to denote the Dirac-Frenkel dynamics with minimal-norm solution of the normal equations \eqref{eq:GNormal} obtained with truncated SVD to some tolerance $\epsilon$, unless stated otherwise.

\subsection{Examples with tangent-space collapse}
We revisit the wave example  introduced earlier.
Using the minimal-norm velocity as reference and the DFO dynamics given by \eqref{eq:DFO:MinNormRefMomEq}--\eqref{eq:DFO:MinNormRefThetaEq} we obtain with the momentum $m(t)$ a non-zero velocity component in the direction $[1, -1, 0, 0]^{\top}$, which is in the Jacobian nullspace when the waves collide, and so the waves can separate after the tangent space collapse; see Figure~\ref{fig:rankloss_collapse_vs_momentum}.
This confirms the theoretical analysis for the exact collapse case $\rho=0$ in \Cref{appendix:WaveToy} which proves that the continuous-time DFO dynamics recover the exact PDE solution through the collision for a suitable choice of $\lambda$.
Additionally, we consider an ill-conditioned case with $\rho$ small but positive. In this case, DF recovers after a while but an order of magnitude larger error is attained than with DFO; see Figure~\ref{fig:ODECasePlot}a.

Let us consider another example affected by tangent space collapse: an advection-reaction equation that leads to Jacobian rank loss at time $t = \pi/2$; see Figure~\ref{fig:ODECasePlot}b and Appendix~\ref{appendix:AdvReact} for a details on the problem.
We obtain a similar effect with the DFO dynamics \eqref{eq:DFO:MinNormRefMomEq}--\eqref{eq:DFO:MinNormRefThetaEq} to the one observed for the wave equation:
at the singular point, where the dimension of the tangent space space drops to zero, the momentum injects non-zero velocity directions without sacrificing the instantaneously residual minimizing condition.
The variational space hence recovers full dimension again, and the dynamics are approximated with high accuracy.

\begin{figure}[t]
    \centering
    \setlength{\tabcolsep}{6pt}

    \begin{tabular}{cc}
        \includegraphics[width=0.48\linewidth]{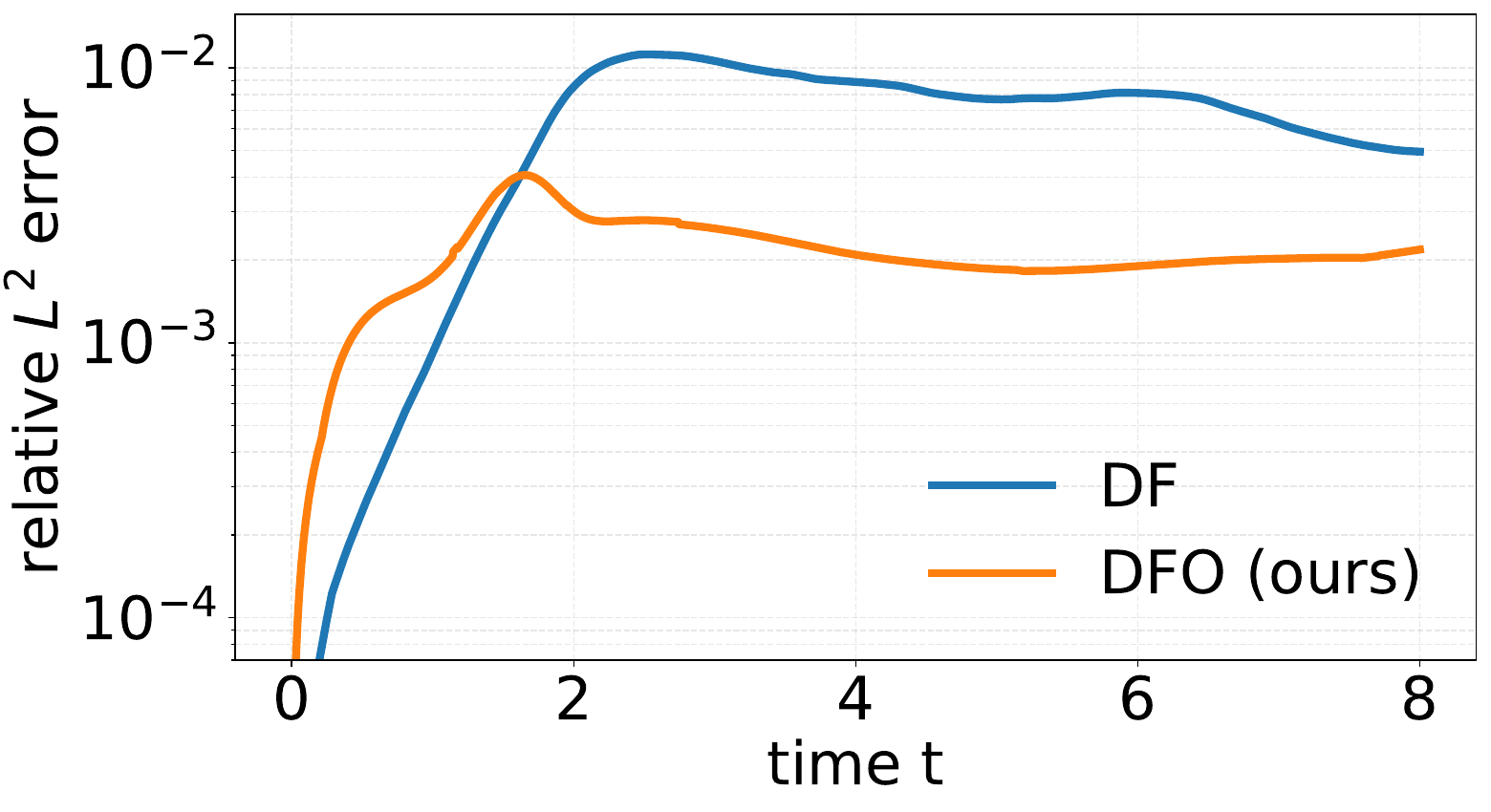} &
        \hspace*{-0.4cm}\includegraphics[width=0.48\linewidth]{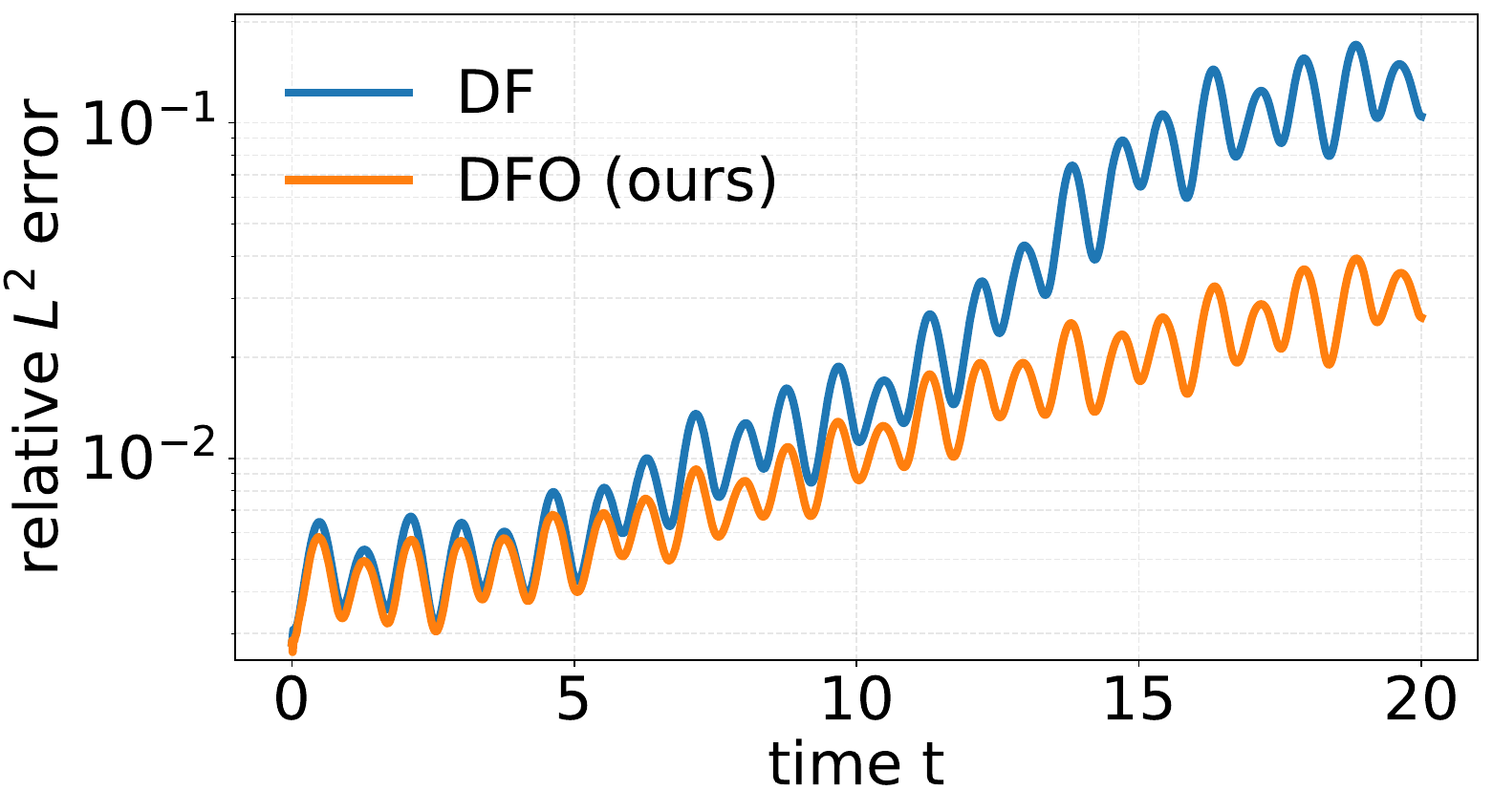}                                   \\
        \small (a) rotating detonation waves                                        & \hspace*{-0.5cm}\small (b) transport through flow field \\[2pt]
        \multicolumn{2}{c}{\includegraphics[width=0.48\linewidth]{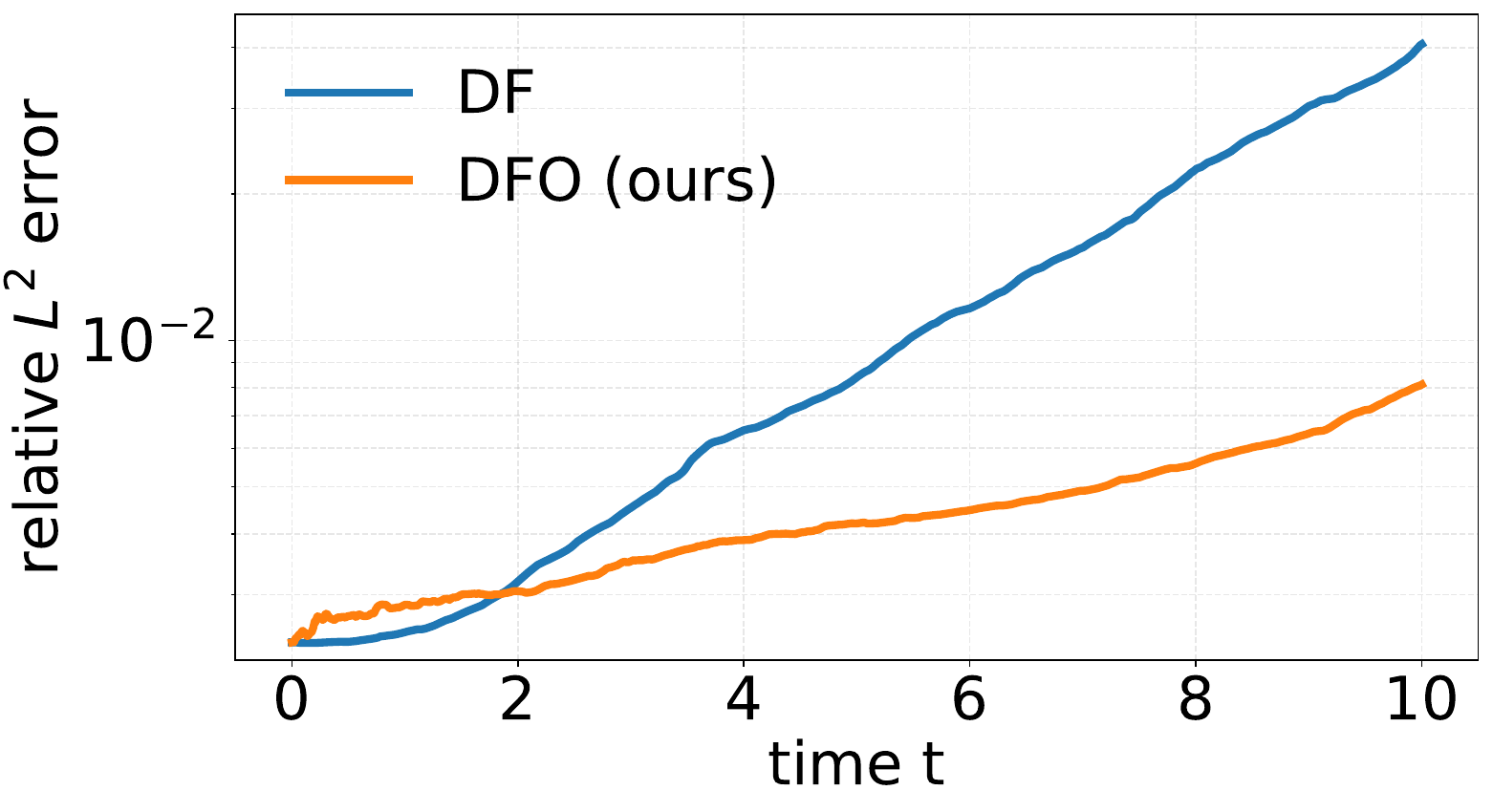}}                                  \\
        \multicolumn{2}{c}{\small (c) charged particles in electric field}
    \end{tabular}

    \caption{Relative error over time for the three low-/moderate-dimensional PDEs, comparing minimal-norm DF (tSVD) and the proposed DFO.}
    \label{fig:LowDimErrorPlots}
\end{figure}

\subsection{Low-dimensional PDE examples} \label{ssec:num-low-dim}

\paragraph{Examples}(Details in Appendix~\ref{appendix:pdes}).

(a) \emph{Rotating detonation waves}: We use the model of \citet{koch_mode-locked_2020}, motivated by rotating detonation engines for space propulsion. Its degrees of freedom are the intensive property of the working fluid  and the combustion progress. The solution features a sharp-fronted wave that propagates around a circular domain and that spawns a second wave.

(b) \emph{Transport through flow field}: We model transport through a nonuniform, time-dependent flow field with the advection equation.

(c) \emph{Charged particles in electric field}: We use the Vlasov equation to model the dynamics of charged particles in a collisionless plasma. Starting from a localized particle distribution, the electric field advects, shears, and deforms the density in phase space over time.

\vspace*{-0.2cm}\paragraph{Parametrizations, time discretization, other methods}
We use multi-layer perceptron (MLP) parametrizations with periodic embeddings to enforce periodic boundary conditions. The DF dynamics are discretized with Runge-Kutta 4 (RK4) and the DFO dynamics with a combination of RK4 and backward Euler (see Appendix~\ref{appendix:TimeDisc} for details).
We compared our approach DFO to several other methods that implement instantaneous residual minimization: Dirac-Frenkel (DF) with tSVD-based approximate minimal-norm solution (DF tsvd) as used in, e.g., \citet{BPE22NG}; DF with Tikhonov regularization (DF Tiko) as in \citet{feischl2024regularized}, TENG \cite{pmlr-v235-chen24ad}, which performs an inner re-training loop with natural gradient descent; Neural IVP (NIVP) \cite{finzi_stable_2023}, which re-trains the neural-network from scratch at occasional time steps to improve conditioning; and RSNG \cite{berman2023randomized}, which uses randomized updates to improve conditioning. We sweep the hyper-parameters and show the best results, except for RSNG for which we show the average over five runs as it is a randomized method.

\begin{figure}
    \centering
    \includegraphics[width=0.95\columnwidth]{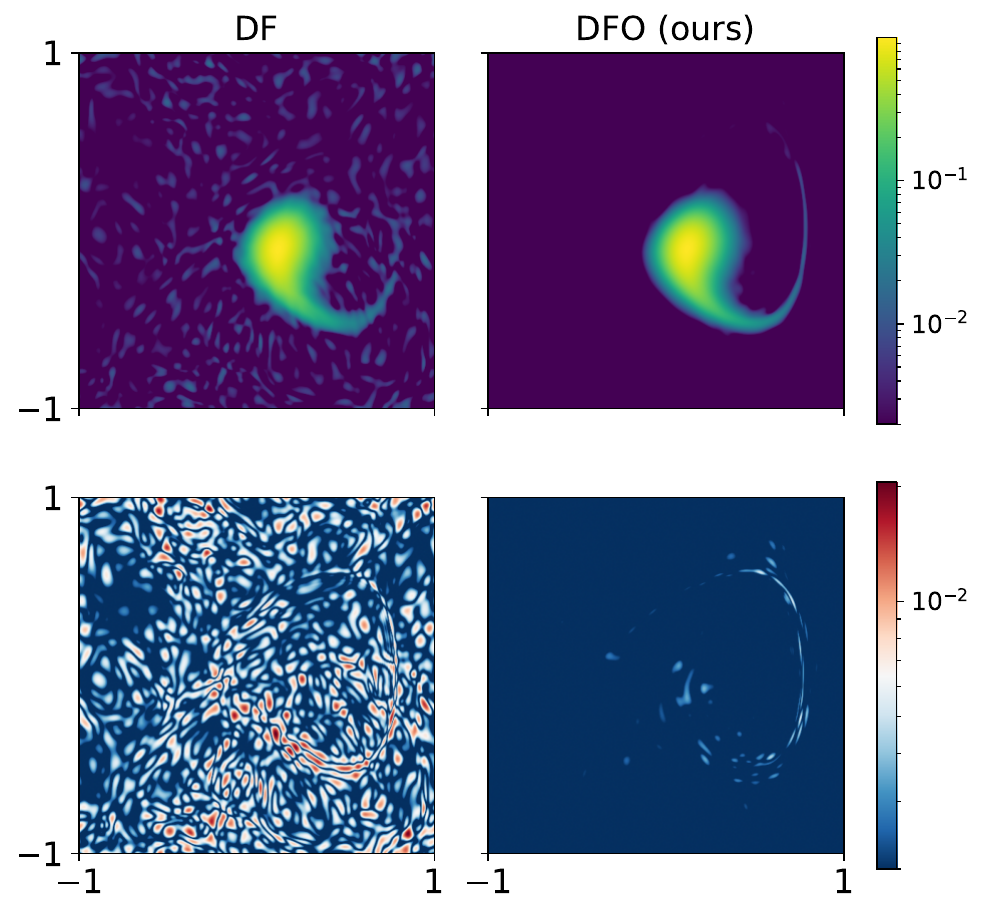}\\

    \caption{Charged particles: The pointwise error plots show that DFO (right) suppresses background error away from the solution support by using momentum to select coherent, smooth parameter velocities in nullspace directions, whereas DF (left) uses the dynamics-agnostic 2-norm regularizer that leads to less informative dynamics into the nullspace directions.}
    \label{fig:2DErrorPlotsAdvectionVlasov}
\end{figure}

\begin{table}[t]
    \caption{DFO achieves the lowest avg.\ $L^2$ error with near-negligible additional cost.}
    \label{tab:low-dim-table}
    \centering
    \small
    \setlength{\tabcolsep}{6pt}
    \begin{tabular}{lccr}
        \toprule
        Method     & $L^2$ err.        & $L^2$ err. @ $T$  & Runtime (DF=1)        \\
        \midrule
        \multicolumn{4}{l}{\textbf{Rotating detonation waves}}                     \\

        DF + Tiko  & 6.41e-03          & 5.45e-03          & $1 \times$            \\
        DF + tSVD  & 6.06e-03          & 4.02e-03          & $1 \times$            \\
        NIVP       & 6.41e-03          & 5.71e-03          & $1 \times$            \\
        RSNG       & 1.22e-02          & 4.04e-03          & $\mathbf{0.3} \times$ \\
        TENG       & 6.83e-03          & 6.35e-03          & $3 \times$            \\
        DFO (ours) & \textbf{2.10e-03} & \textbf{1.91e-03} & $1 \times$            \\

        \midrule
        \multicolumn{4}{l}{\textbf{Transport through flow field}}                  \\

        DF + Tiko  & 6.44e-02          & 1.07e-01          & $1 \times$            \\
        DF + tSVD  & 3.82e-02          & 1.05e-01          & $1 \times$            \\
        NIVP       & 1.69e-02          & \textbf{2.32e-02} & $1 \times$            \\
        RSNG       & 3.13e-02          & 1.03e-01          & $\mathbf{0.5} \times$ \\
        TENG       & 5.68e-02          & 1.55e-01          & $2 \times$            \\
        DFO (ours) & \textbf{1.23e-02} & 2.63e-02          & $1 \times$            \\
        \midrule
        \multicolumn{4}{l}{\textbf{Charged particles in electric field}}           \\

        DF + Tiko  & 8.25e-03          & 1.62e-02          & $0.9 \times$          \\
        DF + tSVD  & 1.22e-02          & 4.06e-02          & $1 \times$            \\
        NIVP       & 8.48e-03          & 1.79e-02          & $1 \times$            \\
        RSNG       & 2.72e-02          & 1.02e-01          & $\mathbf{0.5} \times$ \\
        TENG       & 6.01e-02          & 3.17e-01          & $2 \times$            \\
        DFO (ours) & \textbf{3.95e-03} & \textbf{8.06e-03} & $1 \times$            \\
        \bottomrule
    \end{tabular}
\end{table}

\begin{table}[t]
    \caption{Summary of results for the 5D Fokker-Planck equation.}
    \label{tab:FPE-table}
    \centering
    \small
    \setlength{\tabcolsep}{6pt}
    \begin{tabular}{lccr}
        \toprule
        Method      & Err. mean         & Err. cov          & Runtime (DF=1)        \\
        \midrule
    
        DF + tSVD   & 1.46e-02          & 3.17e-01          & $1$                   \\
        RSNG        & 2.81e-02          & 4.33e-01          & $\mathbf{0.4 \times}$ \\
        NIVP        & 2.82e-01          & 1.71e+01          & $2 \times$            \\
        TENG        & 2.23e-02          & 5.81e-01          & $2 \times$            \\
        DFO (ours)  & \textbf{4.47e-03} & \textbf{5.26e-02} & $1 \times$            \\
        RDFO (ours) & 4.51e-03          & 5.60e-02          & $0.5 \times$          \\

        \bottomrule
    \end{tabular}
\end{table}

\vspace*{-0.2cm}\paragraph{Results: Lower overall error, negligible overhead}
Figure~\ref{fig:LowDimErrorPlots} shows that DFO leads to lower errors than minimal-norm Dirac-Frenkel dynamics. A comparison to other methods provides further evidence that Dirac-Frenkel-Onsager leads to competitive errors that are lowest across all three examples (Table~\ref{tab:low-dim-table}). In particular, we note that the extra costs incurred by Onsager dynamics and the momentum variable are almost negligible, compared to the extra costs incurred by, e.g., TENG in certain problems. We note that DFO can be combined with randomization techniques to further reduce costs; see Appendix~\ref{appendix:Random} and the high-dimensional problem below.

\vspace*{-0.2cm}\paragraph{Results: Less background error away from support} When plotting the point-wise error over the spatial domain (see Figure~\ref{fig:2DErrorPlotsAdvectionVlasov}), we observe that DFO produces noticeable lower background error in regions where the true solution is essentially inactive. This improvement is explained by how DFO resolves the DF gauge freedom. The momentum supplies a coherent, temporally smooth choice for the parameter velocity components in all directions, rather than leaving the nullspace components to be set by the dynamics-agnostic 2-norm regularizer. As a result, spurious parameter motion that does not affect the solution is nonetheless steered in a consistent way, reducing the accumulation of small off-support artifacts.

\begin{figure}
    \centering
    \includegraphics[width=1.0\linewidth]{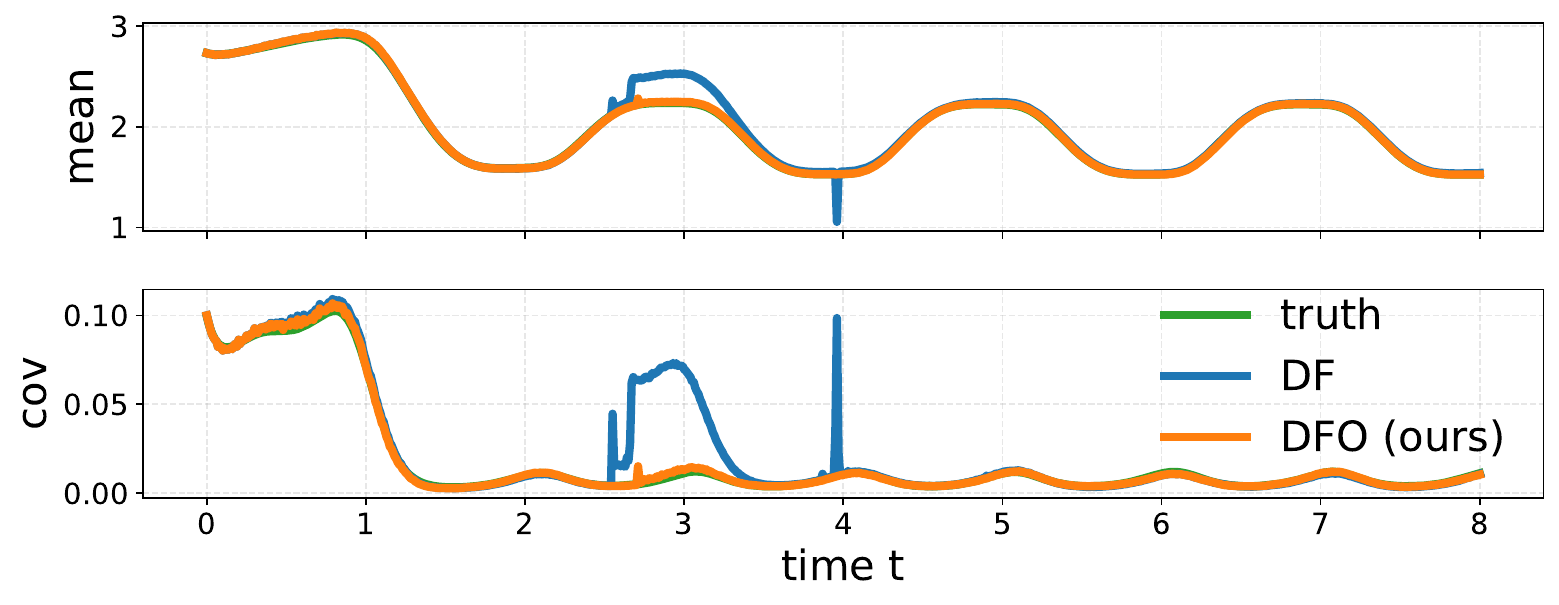}
    \caption{Fokker-Planck 5D: DFO, by promoting temporal smoothens and so reducing erratic parameter velocities, yields more accurate mean and covariance predictions and avoids the jumps observed in DF.}
    \label{fig:mean-cov-fokker-planck}
\end{figure}

\subsection{High-dimensional Fokker-Planck equation} \label{ssec:num-hdim}
We consider the evolution of five interacting particles in an aharmonic confining potential with an additional repulsive interaction term whose joint density $u(t,x)$ with $x\in\R^5$ evolves according to a Fokker--Planck equation. 
We solve the Fokker--Planck equation using an MLP with three layers and width 20; see Appendix~\ref{appendix:FPE}.

Figure~\ref{fig:mean-cov-fokker-planck} shows the predicted mean and diagonal entry of the covariance corresponding to dimension five. While DF dynamics lead to jumps in the solution, our DFO leads to smoother mean and covariance predictions. This is in agreement with the results shown in Table~\ref{tab:FPE-table}, where our approach achieves the smallest approximation error for the mean and covariance. We additionally show in Table~\ref{tab:FPE-table} a variant RDFO of DFO that uses a randomized SVD (see Appendix~\ref{appendix:Random} for details), which helps to reduce the runtime in this example by a factor two compared to DFO and DF.

\section{Conclusions and limitations}
\emph{Conclusions} This work proposes an Onsager-type momentum mechanism that systematically handles tangent-space collapse and ill-conditioning in Dirac-Frenkel dynamics. By injecting momentum only along Jacobian-null (gauge) directions, we uniquely determine a parameter evolution while leaving the Galerkin-optimal, instantaneous residual-minimizing function-space dynamics unchanged. Numerical experiments demonstrate that the proposed Dirac-Frenkel-Onsager scheme can cope with tangent-space collapse situations, less background error is injected because momentum leads to informed parameter velocities in all directions, and the approach incurs little extra costs and scales to high-dimensional problems.

\emph{Limitations}
(i) DFO resolves non-uniqueness along Jacobian-null (gauge) directions, but it does not directly cure ill-conditioning in function-relevant directions associated with small yet nonzero singular values.

(ii) The method introduces a dissipation/time-scale parameter controlling how aggressively the history variable tracks the instantaneous Dirac-Frenkel reference. In this work we use simple, problem-dependent choices; principled and automated selection strategies (e.g., based on conditioning indicators or residual) remain to be developed. 

\section*{Impact Statement}
This paper presents work whose goal is to advance the field of Machine Learning. There are many potential societal consequences of our work, none which we feel must be specifically highlighted here.

\bibliography{bibliography}
\bibliographystyle{icml2026}

%%%%%%%%%%%%%%%%%%%%%%%%%%%%%%%%%%%%%%%%%%%%%%%%%%%%%%%%%%%%%%%%%%%%%%%%%%%%%%%
%%%%%%%%%%%%%%%%%%%%%%%%%%%%%%%%%%%%%%%%%%%%%%%%%%%%%%%%%%%%%%%%%%%%%%%%%%%%%%%
% APPENDIX
%%%%%%%%%%%%%%%%%%%%%%%%%%%%%%%%%%%%%%%%%%%%%%%%%%%%%%%%%%%%%%%%%%%%%%%%%%%%%%%
%%%%%%%%%%%%%%%%%%%%%%%%%%%%%%%%%%%%%%%%%%%%%%%%%%%%%%%%%%%%%%%%%%%%%%%%%%%%%%%
\newpage
\appendix
\onecolumn
% \section{You \emph{can} have an appendix here.}

% You can have as much text here as you want. The main body must be at most $8$
% pages long. For the final version, one more page can be added. If you want, you
% can use an appendix like this one.

% The $\mathtt{\backslash onecolumn}$ command above can be kept in place if you
% prefer a one-column appendix, or can be removed if you prefer a two-column
% appendix.  Apart from this possible change, the style (font size, spacing,
% margins, page numbering, etc.) should be kept the same as the main body.
% %%%%%%%%%%%%%%%%%%%%%%%%%%%%%%%%%%%%%%%%%%%%%%%%%%%%%%%%%%%%%%%%%%%%%%%%%%%%%%%
% %%%%%%%%%%%%%%%%%%%%%%%%%%%%%%%%%%%%%%%%%%%%%%%%%%%%%%%%%%%%%%%%%%%%%%%%%%%%%%%

\section*{Appendix outline}
This appendix complements the main text as follows.
\begin{itemize}
    \item \Cref{appendix:DemonstrationExamples} provides details on the two examples of \Cref{sec:Prelim:ToyExamples} (wave collision and advection--reaction) illustrating tangent-space collapse and how DFO escapes it.
    \item \Cref{appendix:pdes} collects the specifications for the PDE benchmarks of \Cref{ssec:num-low-dim,ssec:num-hdim}.
    \item \Cref{appendix:Numerics} reports details about architectural choices, least-squares solvers, IC fitting, some extended numerical results, and baselines for the experiments in \Cref{ssec:num-low-dim,ssec:num-hdim}.
    \item \Cref{appendix:Random} summarizes the randomized tSVD variant used in RDFO (\Cref{tab:FPE-table}) to reduce the cost of least-squares and to approximate nullspace projections.
    \item \Cref{appendix:TimeDisc} specifies the RK4 time discretization used for DFO in the PDE experiments of \Cref{ssec:num-low-dim}.
\end{itemize}

% ============================================================
\section{Examples with tangent-space collapse}\label{appendix:DemonstrationExamples}

% ------------------------------------------------------------
\subsection{Wave collision problem}\label{appendix:WaveToy}

We consider the 1D wave equation as in the main text $\partial^2_t u = c^2 \partial_x^2 u$ with initial conditions $u(0, x) = \phi_{0}(x;-2) + \phi_{\rho}(x;2)$ and $u_t(0,x)= c \partial_\mu \phi_0(x;-2)\;-\;c \partial_\mu \phi_\rho(x;2)$, where we recall $\phi_{\rho}(x;\mu) = \exp(-\frac{1}{2}(x-\mu)^2/(1 + \rho))$ for a parameter $\rho \geq 0$.
It can be easily checked that the analytical solution is given by
\begin{equation}\label{eq:WaveToy:exact-solution}
    \begin{split}
        u(t,x) & =\phi(x-ct;-2)+\phi(x+ct;2). \\
    \end{split}
\end{equation}
We bring the equation in first-order form
and use the two-wave parameterization
\(\hat u(\theta(t),x)\) from \Cref{sec:Prelim:ToyExamples}, see \eqref{eq:ToyExample:ParamWave}.
We let $c=1$ and study the case and \(\rho=0\), where the tangent space loses rank at collision.
Indeed, at a collision event the two wave locations coincide, namely \(\theta_1=\theta_2\), and
\(\partial_{\theta_1}\hat u(\theta,\cdot)\equiv \partial_{\theta_2}\hat u(\theta,\cdot)\),
so that the direction
\begin{equation}\label{eq:WaveToy:xi1-null}
    \xi_1 = [1,-1,0,0]^\top \in \nl(G(\theta)).
\end{equation}
If additionally \(\theta_3=\theta_4\), then
\(\partial_{\theta_3}\hat u(\theta,\cdot)\equiv -\partial_{\theta_4}\hat u(\theta,\cdot)\),
so also
\begin{equation}\label{eq:WaveToy:xi2-null}
    \xi_2 = [0,0,1,1]^\top \in \nl(G(\theta)).
\end{equation}
Consequently, the tangent space dimension drops and the DF least-squares admits infinitely many parameter velocities.
Let \(\bar\eta(\theta(t))\) denote the DF reference velocity (minimal-norm solution). By construction,
\(\bar\eta(\theta(t))\in \mathrm{range}(G(\theta(t)))\) and therefore \(\bar\eta(\theta(t))\perp \nl(G(\theta(t)))\), hence
\begin{equation}\label{eq:WaveToy:minnorm-null-orthogonal}
    \xi_1^\top \bar\eta(\theta(t))=0 \quad \text{and} \quad \xi_2^\top \bar\eta(\theta(t))=0
    \qquad \text{whenever } \xi_1,\xi_2\in\nl(G(\theta(t))).
\end{equation}
Thus, at collision the minimal-norm DF velocity has no component along the critical separation direction \(\xi_1\).

\paragraph{Why DFO escapes at collapse.}
Let now $\xi=\xi_1$ and define the scalar components
\[
    \beta(t)=\xi^\top \bar\eta(\theta(t)), \qquad \alpha(t)=\xi^\top m(t).
\]
The Onsager equation \(\tau \dot m = \bar\eta-m\) implies \(\tau \dot\alpha(t)=\beta(t)-\alpha(t)\).
Before collision, \(\nl(G(\theta(t)))=\{0\}\) and thus \(P(\theta(t))=0\); therefore DFO reduces to DF at the level of \(\dot\theta\),
but \(m(t)\) still accumulates a smooth history of \(\bar\eta(\theta(t))\), so \(\alpha(t)\) is generically nonzero near collision.
At the collision time (collapse), \(\xi\in\nl(G(\theta(t)))\) and therefore \(\xi^\top P(\theta(t))m(t)=\xi^\top m(t)=\alpha(t)\).
Using the DFO gauge-fixed update \(\dot\theta=\bar\eta+\lambda P(\theta)m\), we obtain
\begin{equation}\label{eq:WaveToy:dfo-null-injection}
    \xi^\top \dot\theta(t)
    =
    \xi^\top \bar\eta(\theta(t))+\lambda\xi^\top P(\theta(t))m(t)
    =
    0+\lambda\alpha(t)\neq 0,
\end{equation}
so DFO produces a nonzero velocity component along \(\xi\) and the waves can separate after collision.
Furthermore, we can make the following exact-recovery statement for a specific choice of the DFO parameter.

\begin{proposition}[DFO recovers the exact crossing continuation]\label{prop:DFO-recover-collapse}
    Consider the collapse case $\rho=0$ and $c=1$.
    Let
    \begin{equation}\label{eq:WaveToy:exact-param-path}
        \theta^*(t) = [-2+t,\, 2-t,\, -2+t,\, 2-t]^\top,
        \qquad
        q = \dot{\theta}^*(t) = [1,\,-1,\,1,\,-1]^\top .
    \end{equation}
    Assume that the continuous-time DFO dynamics $\tau \dot m=\bar\eta(\theta)-m$ and $\dot\theta=\bar\eta(\theta)+\lambda P(\theta)m$ are initialized with $\theta(0)=\theta^*(0)$ and $m(0)=0$, and choose
    \begin{equation}\label{eq:WaveToy:exact-parameter}
        \lambda = \left(1-\exp(-2/\tau)\right)^{-1}.
    \end{equation}
    Then the path $\theta^*(t)$ satisfies the continuous-time DFO dynamics and the represented function $\hat u(\theta^\ast(t),\cdot)$ equals the exact wave-equation solution \eqref{eq:WaveToy:exact-solution} for all $t\geq 0$.
\end{proposition}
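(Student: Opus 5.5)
The plan is to verify the DFO equations directly along the candidate path $\theta^*(t)$. I first compute the reference velocity $\bar\eta(\theta^*(t))$ for every $t$, then the momentum $m(t)$ in closed form via the filter representation \eqref{eq:MomentumEqAsFilter}, and finally check $\dot\theta^*=\bar\eta+\lambda P m$ pointwise. The exceptional time $t=2$ is treated separately.

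\emph{Step 1: the represented function is exact.} With $\rho=0$ and $c=1$, I note that $\phi_0(x;-2+t)=\phi(x-t;-2)$ and $\phi_0(x;2-t)=\phi(x+t;2)$. Hence $\hat u^{(1)}(\theta^*(t),\cdot)$ equals the solution \eqref{eq:WaveToy:exact-solution}. Since $\partial_t\phi_0(x;\mu(t))=\dot\mu\,\partial_\mu\phi_0$, the second component $\hat u^{(2)}(\theta^*(t),\cdot)=\partial_\mu\phi_0(x;-2+t)-\partial_\mu\phi_0(x;2-t)$ equals $\partial_t u$. So the first-order system is solved exactly by $\hat u(\theta^*(t),\cdot)$. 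This proves the second claim of the proposition once the first claim is established.

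\emph{Step 2: $\bar\eta$ along the path.} By the chain rule \eqref{eq:DF:PartialTU}, $\nabla_\theta\hat u(\theta^*,\cdot)^\top q=\partial_t\hat u(\theta^*,\cdot)=\mathcal F(\hat u(\theta^*,\cdot))$. Therefore $q$ attains zero residual in \eqref{eq:LS}, and every least-squares solution has the form $q+w$ with $w\in\nl(G(\theta^*(t)))$.

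For $t\neq 2$ we have $\theta_1\neq\theta_2$ and $\theta_3\neq\theta_4$. The four tangent functions are then linearly independent: the two components decouple, and translates of a Gaussian derivative (respectively second derivative) at distinct centres are independent. So $G$ is nonsingular, $P=0$ and $\bar\eta=q$.

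At $t=2$ all four parameters vanish. Then $\partial_{\theta_1}\hat u=\partial_{\theta_2}\hat u=(\partial_\mu\phi_0,0)$ and $\partial_{\theta_3}\hat u=-\partial_{\theta_4}\hat u=(0,\partial_\mu^2\phi_0)$. These two functions are nonzero and live in different components, so the rank is exactly $2$ and $\nl(G)=\operatorname{span}\{\xi_1,\xi_2\}$ from \eqref{eq:WaveToy:xi1-null}--\eqref{eq:WaveToy:xi2-null}. The minimal-norm solution is the orthogonal projection of $q$ onto $\nl(G)^\perp$. Since $\xi_1^\top q=2$ and $\xi_2^\top q=0$, this gives $Pq=\xi_1$ and $\bar\eta(\theta^*(2))=q-\xi_1=[0,0,1,-1]^\top$.

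\emph{Step 3: momentum and the check at collapse.} Along $\theta^*$, $\bar\eta(\theta^*(s))=q$ except at the single point $s=2$. The integral in \eqref{eq:MomentumEqAsFilter} with $m(0)=0$ is therefore unaffected, giving $m(t)=(1-e^{-t/\tau})q$. This $m$ is continuous, and it satisfies $\tau\dot m=\bar\eta-m$ for every $t\neq 2$. For $t\neq2$, $P=0$ and $\dot\theta^*=q=\bar\eta$, so the $\theta$-equation holds. At $t=2$,
\[
\bar\eta+\lambda P m=(q-\xi_1)+\lambda(1-e^{-2/\tau})Pq=q-\xi_1+\xi_1=q=\dot\theta^*(2),
\]
exactly by the choice \eqref{eq:WaveToy:exact-parameter}. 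Hence the $\theta$-equation holds for all $t\ge0$.

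\emph{Main obstacle.} The delicate point is the single time $t=2$, where $\bar\eta$ is discontinuous. The $m$-equation can then only be required in the integral (Carathéodory) sense \eqref{eq:MomentumEqAsFilter}, or for $t\neq2$; I would state explicitly that this is the meaning of ``satisfies the dynamics''. The remaining care is the rank computation at $t=2$: one must confirm the nullspace is exactly two-dimensional, so that $Pq=\xi_1$ and not something larger.
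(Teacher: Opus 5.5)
Your proposal is correct and follows essentially the same route as the paper's proof. Both show that $\bar\eta(\theta^*(t))=q$ and $P(\theta^*(t))=0$ for $t\neq 2$, derive $m(t)=(1-e^{-t/\tau})q$, and check at $t=2$ that $\bar\eta=q-\xi_1$ and $P m(2)=(1-e^{-2/\tau})\xi_1$, so the prescribed $\lambda$ gives $\dot\theta^*(2)=q$.

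Your version is slightly more careful than the paper's in two places. You justify the independence and rank claims explicitly, which the paper only asserts. You also note that the $m$-equation can hold only in the integral sense at $t=2$; the paper handles this point by taking the left limit of $m$.
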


\begin{proof}
    Along $\theta^*(t)$, the parametrization \eqref{eq:ToyExample:ParamWave} represents the exact solution \eqref{eq:WaveToy:exact-solution}.
    For $t\neq 2$, the path $\theta^*(t)$ avoids the rank-loss configuration described by \eqref{eq:WaveToy:xi1-null}--\eqref{eq:WaveToy:xi2-null}, and the exact PDE vector field satisfies
    \begin{equation}\label{eq:WaveToy:exact-tangent-field}
        \mathcal{F}(\hat u(\theta^*(t),\cdot))
        =
        \nabla_\theta \hat u(\theta^*(t),\cdot)^\top q .
    \end{equation}
    By \eqref{eq:WaveToy:exact-tangent-field}, for $t\neq 2$, the minimal-norm DF reference velocity is unique and satisfies $\bar\eta(\theta^*(t))=q$.
    Before collision, the same rank condition from \eqref{eq:WaveToy:exact-param-path} and \eqref{eq:WaveToy:xi1-null}--\eqref{eq:WaveToy:xi2-null} gives $P(\theta^*(t))=0$, so DFO agrees with DF at the level of $\dot\theta$.
    Since $m(0)=0$ and $\bar\eta(\theta^*(t))=q$ for $0\leq t<2$, the Onsager equation gives
    \begin{equation}\label{eq:WaveToy:memory-before-collision}
        m(t) = \left(1-\exp(-t/\tau)\right)q,
        \qquad 0\leq t<2.
    \end{equation}
    Taking the left limit in \eqref{eq:WaveToy:memory-before-collision} at collision yields
    \begin{equation}\label{eq:WaveToy:memory-at-collision}
        m(2) = \left(1-\exp(-2/\tau)\right)q .
    \end{equation}

    At $t=2$, the path \eqref{eq:WaveToy:exact-param-path} satisfies $\theta^*_1=\theta^*_2$ and $\theta^*_3=\theta^*_4$, so the collision nullspace directions are those in \eqref{eq:WaveToy:xi1-null} and \eqref{eq:WaveToy:xi2-null}.
    The exact velocity decomposes as
    \begin{equation}\label{eq:WaveToy:q-decomposition}
        q = [0,0,1,-1]^\top + \xi_1,
    \end{equation}
    where $[0,0,1,-1]^\top$ is orthogonal to the collapse directions \(\xi_1\) and \(\xi_2\) from \eqref{eq:WaveToy:xi1-null}--\eqref{eq:WaveToy:xi2-null}.
    Therefore, by \eqref{eq:WaveToy:q-decomposition} and the minimal-norm orthogonality relation \eqref{eq:WaveToy:minnorm-null-orthogonal}, the DF reference velocity at collision is
    \begin{equation}\label{eq:WaveToy:minnorm-at-collision}
        \bar\eta(\theta^*(2)) = [0,0,1,-1]^\top,
    \end{equation}
    and, by \eqref{eq:WaveToy:memory-at-collision} and \eqref{eq:WaveToy:q-decomposition}, the projected memory in the DFO correction is
    \begin{equation}\label{eq:WaveToy:projected-memory}
        P(\theta^*(2))m(2)
        =
        \left(1-\exp(-2/\tau)\right)\xi_1 .
    \end{equation}
    Hence, with the choice in \eqref{eq:WaveToy:exact-parameter}, the DFO velocity at collision is
    \begin{equation}\label{eq:WaveToy:dfo-velocity-at-collision}
        \dot\theta(2)
        =
        \bar\eta(\theta^*(2))+\lambda P(\theta^*(2))m(2)
        =
        [0,0,1,-1]^\top+\xi_1
        =
        q,
    \end{equation}
    and the separation component is restored exactly.

    After collision, $\theta^\ast$ immediately leaves the rank-loss configuration described, so the unique DF reference velocity along the exact path is again $q$.
    Hence, $P(\theta^*(t))=0$ and $\dot\theta(t)=q$ for $t>2$.
    Together with \eqref{eq:WaveToy:exact-tangent-field} and \eqref{eq:WaveToy:dfo-velocity-at-collision}, this shows that $\theta(t)=\theta^*(t)$ is a solution of the DFO dynamics, and the claim is proven.
\end{proof}

\paragraph{Numerical setup to produce \Cref{fig:rankloss_collapse_vs_momentum}.}
For the numerics, we consider two regimes: the \emph{collapse case} \(\rho=0\) and an \emph{ill-conditioned case} \(\rho=4.635\times 10^{-6}\). The Onsager parameters are set to \(\tau=0.5\) and \(\lambda=1\). We assemble the batch Jacobian \(J(\theta)\) and right-hand side \(f(\theta)\) using \(N=151\) equidistant collocation points on the periodic domain \(\Omega=[-12,12)\). For this toy parameterization, the spatial derivatives entering the wave operator are evaluated analytically. We discretize both DF and DFO in time with forward Euler and the semi-implicit scheme of \Cref{sec:DFO:TimeDisc}, respectively, using step size \(\delta t=3\times 10^{-4}\), and at each time step compute the DF reference velocity \(\bar\eta_k\) via an tSVD-based least-squares solve of the batch system.

% ------------------------------------------------------------
\subsection{Advection--reaction problem}\label{appendix:AdvReact}
Consider the advection--reaction equation on \(\Omega=[0,2\pi)\) with periodic boundary conditions
\[
    \partial_t u(t,x) = -c\,\partial_x u(t,x) - \kappa\,u(t,x) + s(t,x),
\]
with \(c\in\mathbb{R}\) and \(\kappa\in\mathbb{R}\setminus\{0\}\). We choose \(c=1\) and \(\kappa=1\).
We select the initial condition and source term $s(t,x)$ so that the exact solution is
\[
    u(t,x)=a(t)\sin(x)+b(t)\cos(x),
    \qquad a(t)=\sin(t),\quad b(t)=\sin(t),
\]
so in particular \(\partial_t u(\pi/2,\cdot)\equiv 0\).
A direct calculation shows that the choice
\[
    s(t,x)=\big(\cos(t) + (\kappa-c)\sin(t)\big)\sin(x)+\big(\cos(t) + (\kappa+c)\sin(t)\big)\cos(x)
\]
makes the above \(u(t,x)\) satisfy the PDE exactly.
We approximate \(u\) with the two-parameter ansatz
\begin{equation}\label{eq:FCase2:Param_ref}
    \hat u(\theta(t),x)=\hat a(\theta(t))\sin(x)+\hat b(\theta(t))\cos(x),
    \qquad
    \hat a(\theta)=\sin(\theta_1),\ \hat b(\theta)=\sin(\theta_2),
\end{equation}
with \(\theta=[\theta_1,\theta_2]^\top\in\mathbb{R}^2\).
The parameter derivatives are
\[
    \partial_{\theta_1}\hat u(\theta,x)=\cos(\theta_1)\sin(x),
    \qquad
    \partial_{\theta_2}\hat u(\theta,x)=\cos(\theta_2)\cos(x),
\]
hence the tangent space collapses whenever \(\cos(\theta_1)=0\) or $\cos(\theta_2)=0$, e.g.\ at \(\theta_i=\pi/2\) modulo \(2\pi\).
In particular, the chosen parameterization matches the solution exactly with $\theta_1(t)=\theta_2(t)=t$, which can be obtained from the DF least-squares problem at times $t \in [0,\pi/2)$ as the Jacobian is full rank.
When $t$ hits $\pi/2$, we have that \(\cos(\theta_1(t))=\cos(\theta_2(t))=0\), and both derivatives $\partial_{\theta_1} \hatu$ and $\partial_{\theta_2} \hatu$ vanish and the tangent space collapses to \(\{0\}\).
Hence, the DF least-squares admits every velocity as a minimizer, and the minimal-norm selection returns \(\bar\eta(\theta(t))=0\),
and the DF trajectory freezes at the collapse point once it reaches it (see \Cref{fig:ODECasePlot}b).

\paragraph{Why DFO escapes.}
At a collapse point, the Gram matrix satisfies \(G(\theta)=0\) and thus the nullspace projector is \(P(\theta)=I\).
Moreover \(\bar\eta(\theta(t))=0\) at \(t=\pi/2\). Since \(\bar\eta(\theta(t))\neq 0\) away from collapse, the Onsager filter produces a nonzero history \(m(\pi/2)\neq 0\).
Therefore DFO yields
\[
    \dot\theta(\pi/2)=\bar\eta(\pi/2)+\lambda P(\theta(\pi/2))m(\pi/2)
    =
    \lambda m(\pi/2)\neq 0,
\]
so the parameter trajectory leaves the collapse set immediately and the DF evolution resumes uniquely once \(G(\theta)\) regains rank.

\paragraph{Numerical setup to produce \Cref{fig:ODECasePlot}.}
We use an equidistant spatial grid with \(N=512\) points on \([0,2\pi)\) and integrate over the time interval \([0,6]\) with step size \(\delta t=10^{-4}\). For DF and DFO, we discretize the parameter dynamics for \(\theta\) with forward Euler, while the history variable \(m\) is advanced with backward Euler, resulting in the semi-implicit scheme \eqref{eq:DFOTimeDiscFwdBwdEuler} from the main text. The SVD-based least-squares solver employs truncation thresholds given by an absolute tolerance \(10^{-3}\) and a relative tolerance \(10^{-10}\). Finally, we set the Onsager parameters to \(\tau=0.05\) and \(\lambda=10^{-5}\).

% ============================================================

\section{PDE examples specifications}\label{appendix:pdes}
Let us provide a detailed description of the PDEs we use for the experiments presented in \Cref{ssec:num-low-dim,ssec:num-hdim}.
We show the reference (truth) and the solution obtained with the DFO scheme for the 4 PDEs in \Cref{fig:rde-truth-ngmom,fig:adv-truth-ngmom,fig:vlasov-truth-ngmom,fig:fpe-truth-ngmom}.

\subsection{Rotating detonation waves (RDW)}
We consider the rotating detonation wave model of \citet{koch_mode-locked_2020}, motivated by rotating detonation engines for space propulsion. The state consists of two fields on the periodic domain \(\Omega=[0,2\pi)\): an intensive fluid property \(\eta(t,x)\) and the combustion progress \(\lambda(t,x)\). The dynamics generate a sharp-fronted wave that travels around the circular domain and, through the reaction--relaxation coupling, can trigger a secondary wave. Specifically, we solve the following two-field system with periodic boundary conditions,
\[
    \partial_t \eta = -\eta\,\partial_x\eta + \nu\,\partial_{xx}\eta + (1-\lambda)\,\omega(\eta) + \xi(\eta),
    \qquad
    \partial_t \lambda = \nu\,\partial_{xx}\lambda + (1-\lambda)\,\omega(\eta) - \beta(\eta;\mu)\,\lambda,
\]
where
\[
    \omega(\eta)=\exp\!\left(\frac{\eta-\eta_c}{\alpha}\right),
    \quad
    \beta(\eta;\mu)=\frac{\mu}{1+\exp(r(\eta-\eta_p))},
    \quad
    \xi(\eta)=-\varepsilon\,\eta.
\]
We use the parameters
\(\nu=10^{-2}\), \(\mu=3.5\), \(\alpha=0.3\), \(\eta_c=1.1\), \(\varepsilon=0.11\),
\(\eta_p=0.5\), \(r=5.0\),
and the initial condition
\[
    \eta(0,x)=0.4\exp\!\left(-2.25(x-\pi)^2\right)+1,\qquad \lambda(0,x)=0.75.
\]
We integrate until final time \(T=8\).
Since no closed-form solution is available, we compute reference solutions using RK4 in time coupled with a high-order IMEX discretization in space.

\begin{figure*}
    \centering
    \includegraphics[width=0.5\linewidth]{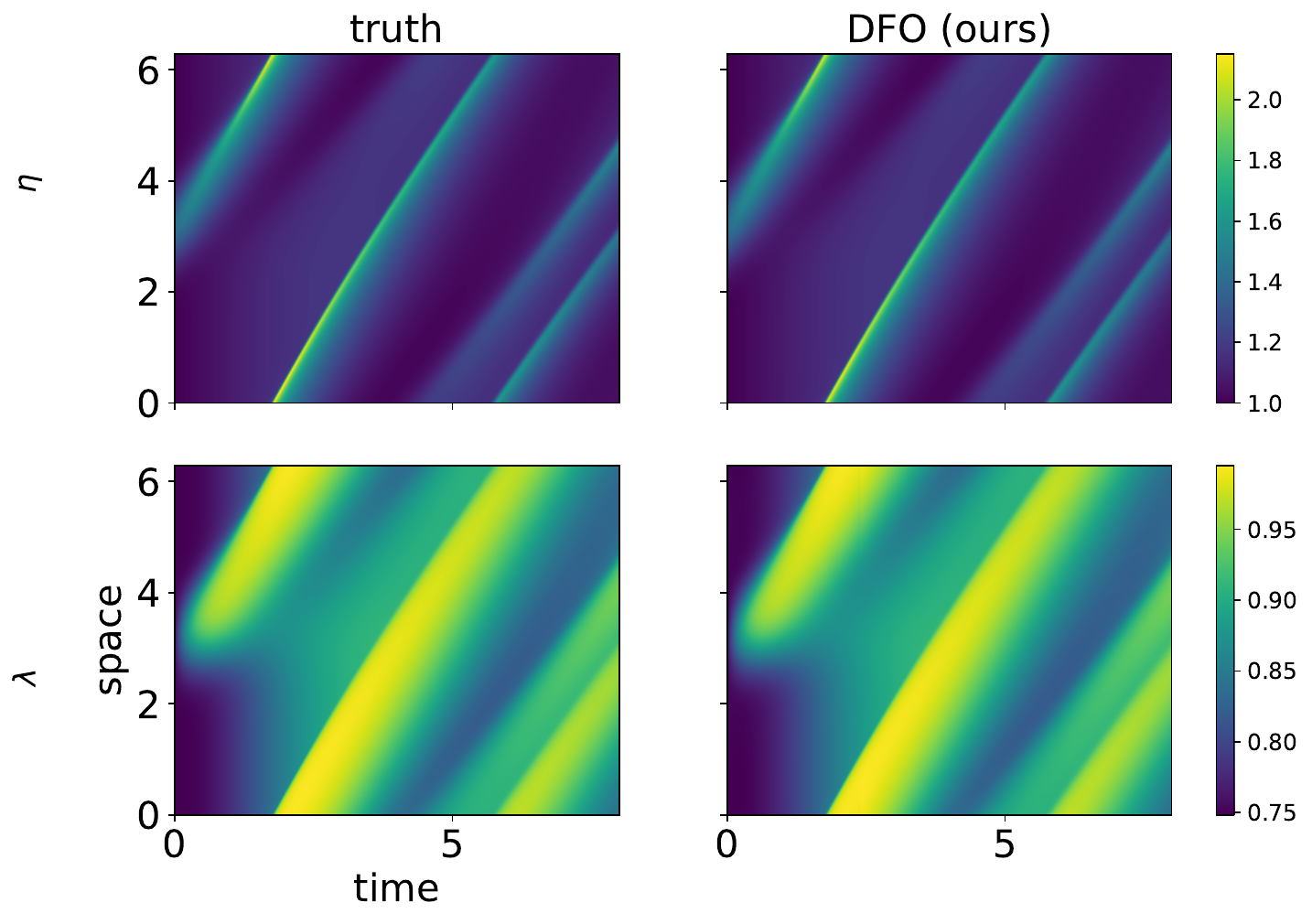}
    \caption{Space-time plot of the fields $\eta$ and $\lambda$, solutions of the rotating detonation waves PDE system}
    \label{fig:rde-truth-ngmom}
\end{figure*}

\subsection{Transport through flow field}
We model passive transport through a nonuniform flow field via a two-dimensional linear advection equation. The unknown \(u(t,x,y)\) represents a scalar quantity (e.g., a tracer concentration) that is carried by a spatially varying velocity field on the periodic domain \(\Omega=[-1,1)^2\). Specifically, we solve
\[
    \partial_t u + c_x(x)\,\partial_x u + c_y(y)\,\partial_y u = 0,
    \qquad (x,y)\in[-1,1)^2,
\]
with periodic boundary conditions and separable advection speeds
\[
    c_x(x)=1.0\Bigl(1+0.6\sin\!\bigl(\tfrac{2\pi\cdot 3 (x-x_0)}{L_x}\bigr)\Bigr),
    \qquad
    c_y(y)=0.8\Bigl(1+0.3\cos\!\bigl(\tfrac{2\pi\cdot 2 (y-y_0)}{L_y}\bigr)\Bigr),
    \qquad
    L_x=L_y=2.
\]
We initialize with a localized Gaussian bump,
\[
    u(0,x,y)=\exp\!\left(-\frac{(x-\bar{x})^2+(y-\bar{y})^2}{\pi\sigma}\right),
    \quad \sigma=8\times10^{-3},\ \bar{x}=-0.2,\ \bar{y}=0,
\]
and integrate until final time \(T=20\).
Since no closed-form solution is available for this spatially varying velocity field, we compute reference solutions using RK4 in time combined with fourth-order finite differences in space.

\begin{figure*}
    \centering
    \includegraphics[width=1.0\linewidth]{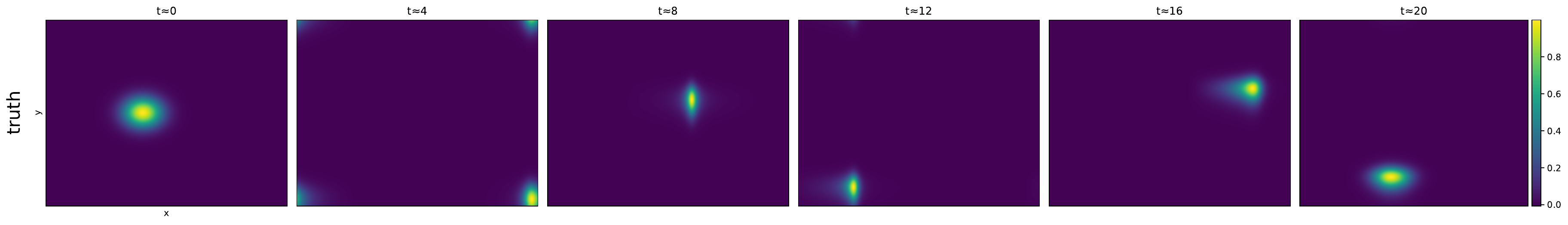}
    \includegraphics[width=1.0\linewidth]{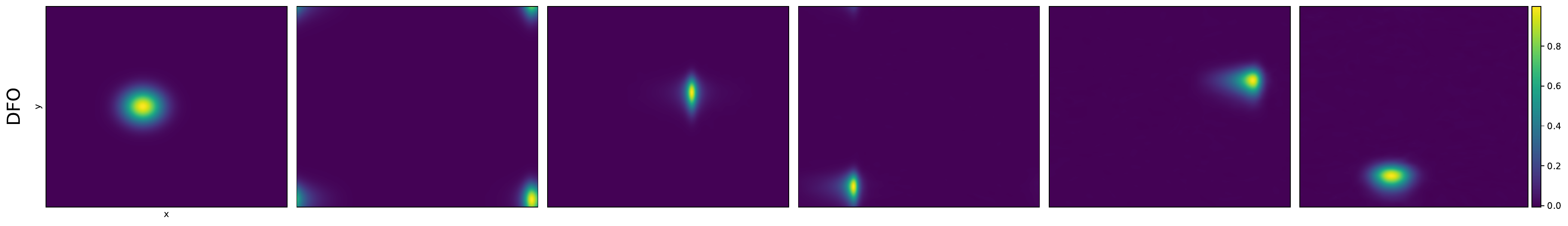}
    \caption{Snapshots of the solution of the advection equation modeling transport through flow field}
    \label{fig:adv-truth-ngmom}
\end{figure*}

\subsection{Charged particles in electric field}
Following a setup similar to \cite{berman2024colora}, we consider a two-dimensional (one space, one velocity) Vlasov model for collisionless charged particles in a prescribed electrostatic potential. The unknown \(u(t,x,v)\) is the particle density in phase space; starting from a localized distribution, the self-consistent transport in \((x,v)\) driven by free streaming and electric forcing advects, shears, and deforms the density over time. Specifically, we solve the Vlasov equation on \([-1,1)^2\),
\[
    \partial_t u(t,x,v) = -v\,\partial_x u(t,x,v) + (\partial_x\phi(x))\,\partial_v u(t,x,v),
    \qquad (x,v)\in[-1,1)^2,
\]
with periodic boundary conditions in both position \(x\) and velocity \(v\). The potential is taken as
\[
    \phi(x) = -\alpha\left(1+\cos(\pi(x+\mu)^4)\right) - \beta\sin(\pi x),
\]
and we set \(\alpha=0.2\), \(\beta=0.1\), \(\mu=0\). The initial condition is a localized Gaussian bump in phase space,
\[
    u(0,x,v)=\exp\!\left(-\frac{(x-0.1)^2+v^2}{\sigma^2}\right),
    \qquad \sigma=0.15,
\]
and we integrate until final time \(T=10\).
Since no closed-form solution is available, we compute reference solutions using RK4 in time coupled with fourth-order finite differences in space.

\begin{figure*}
    \centering
    \includegraphics[width=1.0\linewidth]{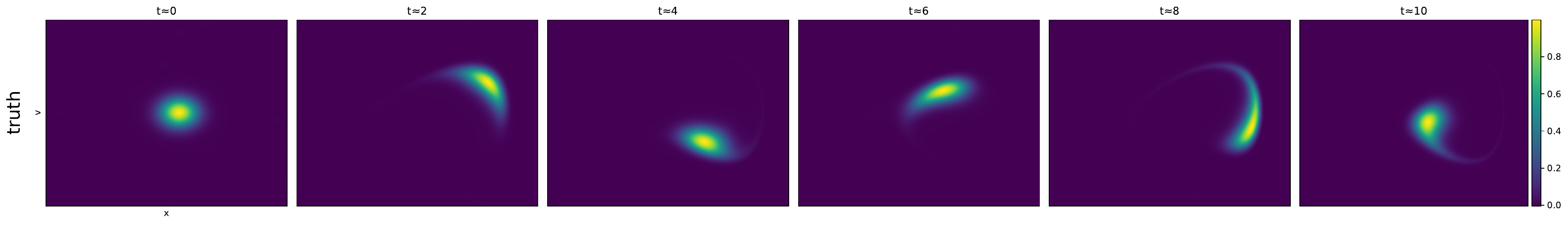}
    \includegraphics[width=1.0\linewidth]{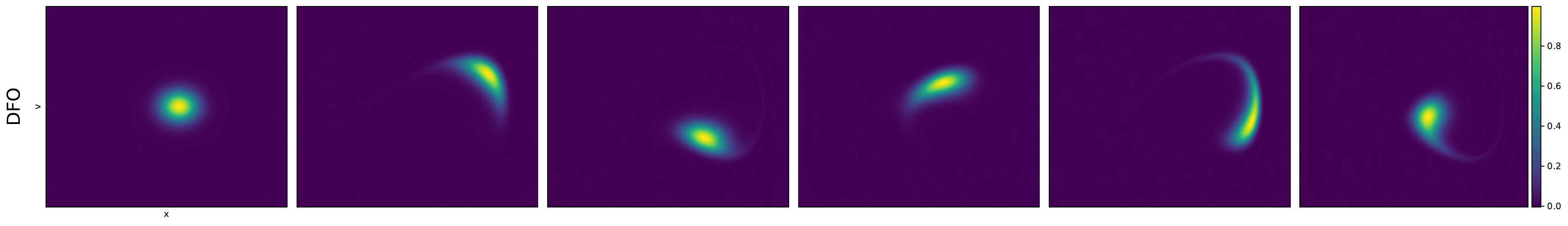}
    \caption{Snapshots of the charged particles density solution of the Vlasov equation}
    \label{fig:vlasov-truth-ngmom}
\end{figure*}

\subsection{Fokker--Planck equation}\label{appendix:FPE}
We adopt the setup of \cite{BPE22NG}, where the dynamics are described in terms of \(d\) interacting particles
\(X(t)=(X_1(t),\dots,X_d(t))^\top\in\mathbb{R}^d\) evolving in an anharmonic confining potential with an additional repulsive interaction term. Specifically, for \(i=1,\dots,d\),
\[
    \mathrm{d}X_i(t)
    =
    \Big(g(t,X_i(t))+\alpha\,\bar X(t)-\alpha\,X_i(t)\Big)\,\mathrm{d}t
    +
    \sqrt{2D}\,\mathrm{d}W_i(t),
    \qquad
    \bar X(t)=\frac{1}{d}\sum_{j=1}^d X_j(t),
\]
where \(W_1,\dots,W_d\) are independent Wiener processes. We start from a Gaussian initial law
\(X(0)\sim\mathcal{N}(m_0,\Sigma_0)\) (equivalently, \(u(0,\cdot)\) is Gaussian), but due to the nonlinearity the density does not remain Gaussian over time.
The joint probability density \(u(t,x)\) of \(X(t)\), with \(x=(x_1,\dots,x_d)^\top\in\mathbb{R}^d\), satisfies the \(d\)-dimensional Fokker--Planck equation
\[
    \partial_t u
    =
    -\sum_{i=1}^d \partial_{x_i}\!\big(u\,h_i(t,x)\big)
    +
    D\sum_{i=1}^d \partial_{x_i}^2 u,
    \qquad
    h_i(t,x)=g(t,x_i)+\alpha\bar x-\alpha x_i,
    \qquad
    \bar x=\frac{1}{d}\sum_{j=1}^d x_j,
\]
with
\[
    g(t,x)=(a(t)-x)^3,
    \qquad
    a(t)=a_{\mathrm{scale}}\big(\sin(\pi a_{\mathrm{freq}} t)+a_{\mathrm{shift}}\big).
\]
We solve the equation in \(d=5\) dimensions, and set the parameters to
\(D=10^{-2}\), \(\alpha=-0.5\), \(a_{\mathrm{scale}}=1.25\), \(a_{\mathrm{shift}}=1.5\), \(a_{\mathrm{freq}}=1\).
The final time is \(T=8\).
To compute reference statistics, we simulate the SDE with an Euler--Maruyama scheme to generate $10^5$ i.i.d.\ particle trajectories and form the empirical mean and covariance.

\begin{figure*}
    \centering
    \includegraphics[width=1.0\linewidth]{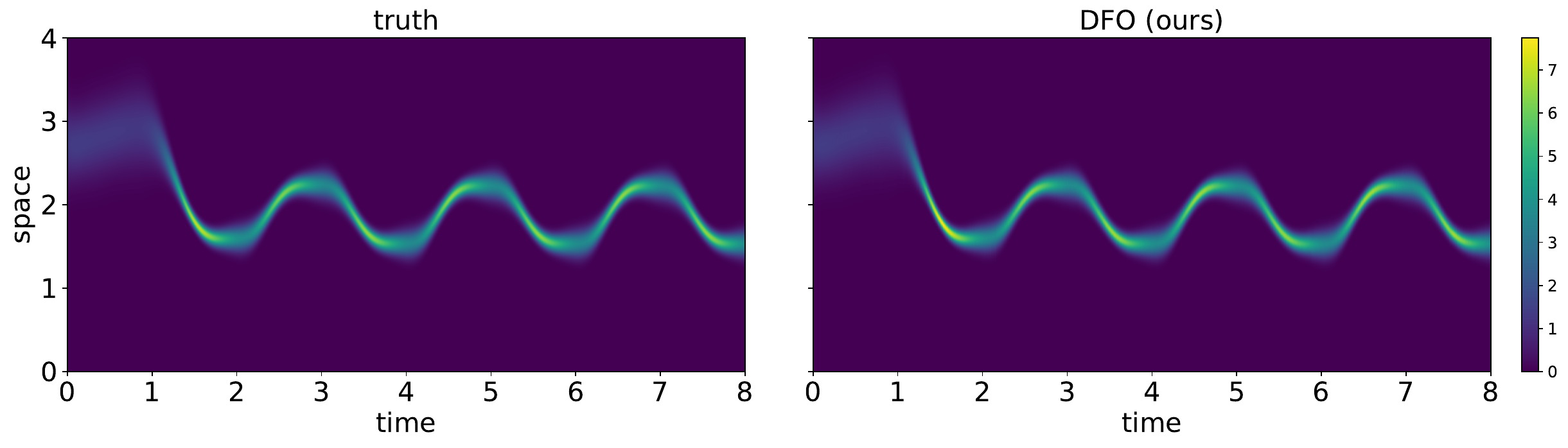}
    \caption{Space-time plot of the marginal density of the fifth component of the solution of the Fokker-Planck equation}
    \label{fig:fpe-truth-ngmom}
\end{figure*}

% ============================================================
\section{Detailed numerical setups}\label{appendix:Numerics}

\subsection{Architectures and periodic embedding}
All experiments use MLP parametrizations equipped with a periodic embedding layer so that periodic boundary conditions are satisfied by construction.
Concretely, we use the periodic embedding from \cite{berman2023randomized}: for an input $x\in\mathbb{R}^d$ with period $P$, each embedding channel is defined by trainable parameters $a,\phi,b\in\mathbb{R}^d$ as
\[
    \texttt{periodic\_embed}(x)
    \;=\;
    \sum_{i=1}^d \big[a\cos( (2\pi/P)\,x + \phi ) + b\big]_i,
\]
and this operation is repeated $w$ times (with independent parameters $(a,\phi,b)$ per channel) to produce an output vector $y\in\mathbb{R}^w$. For the low-dimensional examples (RDW, linear advection, and Vlasov), we fix $a\equiv \mathbf{1}$ and $b\equiv \mathbf{0}$ in the embedding, i.e., only the phase parameters $\phi$ are optimized. For RDW, we use two separate MLPs (one for $\eta$ and one for $\lambda$), each with $4$ layers of width $10$, for a total of $922$ trainable parameters. For Vlasov and 2D advection, we use a single MLP with $3$ layers of width $32$ (3265 parameters). For the 5D Fokker--Planck equation, we use a single MLP with $3$ layers of width $20$ (1581 parameters) and parameterize the density as $\hatu(\theta,x)=\exp(-\mathrm{NN}_\theta(x))$. All networks use the swish activation function.

\subsection{Solving the least-squares subproblem} \label{appendix:lstsq}
The core procedure at each time step of all schemes based on the Dirac-Frenkel residual minimization principle is the solution of the linear least-squares problem
\begin{equation} \label{eq:app-lstsq}
    \min_{\eta \in \R^p}\ \|J\eta-f\|_2^2,
\end{equation}

where $J$ and $f$ are as defined in \eqref{eq:batch-grad}, possibly regularized to handle ill-conditioning. Let us review two standard forms of regularization: (i) a truncated SVD (tSVD), and (ii) explicit Tikhonov regularization.

\paragraph{tSVD regularization.}
Let \(J=U\Sigma V^\top\) be the SVD, with singular values \(\sigma_1\ge\cdots\ge\sigma_p\ge 0\). Given a truncation level \(\epsilon>0\), which in all our experiments using the tSVD subroutine we set relative to the largest singular value \(\epsilon = \epsilon_{\mathrm{rel}}\sigma_1\), we retain only the modes with \(\sigma_i\ge \epsilon\), and define the truncated pseudoinverse
\[
    J_\epsilon^\dagger = V \Sigma_\epsilon^\dagger U^\top,
    \qquad
    (\Sigma_\epsilon^\dagger)_{ii}=
    \begin{cases}
        \sigma_i^{-1}, & \sigma_i\ge \epsilon, \\
        0,             & \sigma_i<\epsilon.
    \end{cases}
\]
The resulting regularized minimal-norm solution to \eqref{eq:app-lstsq} is \(\eta = J_\epsilon^\dagger f\).

\paragraph{Tikhonov regularization.}
Alternatively, one can solve the ridge-regularized problem
\[
    \min_{\eta \in \R^p}\ \|J\eta-f\|_2^2 + \gamma \|\eta\|_2^2,
    \qquad \gamma>0,
\]
whose solution is \(\eta=(J^\top J+\gamma I)^{-1}J^\top f\).

\paragraph{Numerical implementation (QR + SVD).}
In both cases we avoid forming the full SVD of \(J\) by computing a thin QR factorization $J = QR$ followed by an SVD of the smaller triangular factor \(R = \widetilde U\,\Sigma\,V^\top \).
Since \(J = (Q\widetilde U)\Sigma V^\top\), the singular values are those of \(R\), and the right singular vectors are \(V\). We then apply either (i) hard truncation where \(\sigma_i<\epsilon\) is set to $0$ (tSVD), or (ii) the Tikhonov filter \(\sigma_i\mapsto \sigma_i^2/(\sigma_i^2+\gamma)\) to obtain the regularized least-squares solution. The cost of this procedure is in both cases scales like $O(N p^2)$ when $N \geq p$, which is the case for all our experiments.

\subsection{Time and space discretization}
For the low-dimensional examples in \Cref{ssec:num-low-dim}, we integrate the parameter dynamics for \(\theta\) with a fixed-step RK4 scheme across all methods, except for TENG, which uses the Heun method as suggested in \cite{pmlr-v235-chen24ad}. In all cases, the auxiliary momentum variable \(m\) is advanced via a stage-coupled EMA (see \Cref{appendix:TimeDisc}). We use a uniform time step \(\delta t = 4\times 10^{-3}\) for every method. Spatial discretizations rely on equispaced grids with: RDW \(N_x=2048\), Vlasov \(N_x=N_v=500\), and 2D advection \(N_x=N_y=512\).

For the Fokker--Planck example, we use explicit Euler in time for all methods (including TENG), while DFO employs the specialized Euler discretization described in \Cref{sec:DFO:TimeDisc} of the main text. We truncate \(\mathbb{R}^5\) to a periodic box \(x\in[0,4]^5\) and adopt an adaptive sampling strategy: at each time step we draw a mixture of \(2\times 10^3\) uniform points and \(2\times 10^3\) Gaussian points, where the Gaussian component is fitted to the current solution using samples from the previous step.

\subsection{Vlasov: temporal regularity of parameter velocities}\label{appendix:VlasovVelocities}
As noted in the main text, the DFO correction promotes parameter dynamics that are smoother and less erratic in time than ones obtained from the tSVD-regularized DF scheme.
To make this effect visible in the high-dimensional parameter space of the Vlasov experiment, we visualize the
(discrete) parameter velocities produced by DF and DFO over the full time integration through several independent random two-dimensional projections.

\Cref{fig:vlasov-theta-vel-proj} shows that DF produces highly scattered projected velocities with intermittent large
excursions, consistent with erratic changes in the effective update direction when the batch Jacobian is close to
rank-deficient and the active tSVD subspace changes over time.
In contrast, DFO yields projected velocities that remain significantly more concentrated and coherent across time,
highlighting the temporally smoothing effect of the history variable and its nullspace-projected correction.

\begin{figure*}[t]
    \centering
    \includegraphics[width=0.95\linewidth]{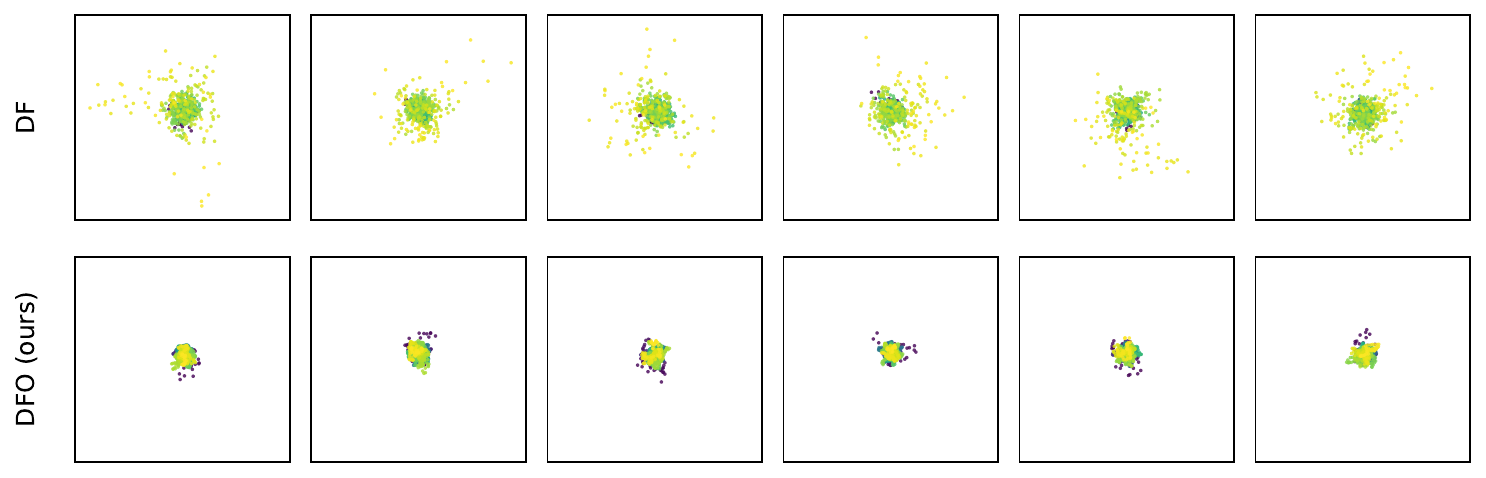}
    \caption{Vlasov experiment: six random two-dimensional projections of the parameter velocities
        (which live in $\R^{3265}$) produced by tSVD-regularized DF (bottom row) and by DFO (top row).
        Each panel corresponds to an independent random projection.
        Markers are colored with a time gradient, with lighter dots indicating later times.
        DF yields more erratic projected velocities with intermittent large excursions, whereas DFO produces more coherent trajectories in these projections, consistent with the discussion in the main text.}
    \label{fig:vlasov-theta-vel-proj}
\end{figure*}

\subsection{Benchmarks}
We run NIVP with Tikhonov regularization and use the direct least-squares solver described in Appendix~\ref{appendix:lstsq} to keep the procedure simple; we use \(10\) restarts with $50k$ Adam steps as suggested in \cite{finzi_stable_2023}. In the Fokker--Planck experiment, the refit objective is chosen to match the IC-fit objective, and we use a truncated SVD in place of Tikhonov regularization, as the latter is ineffective in this setting.

For RSNG, we adopt the sparse sketching strategy (column sampling) proposed in the original paper as this is a core part of the method which can improve accuracy by preventing overfitting \cite{berman2023randomized}. We report results averaged over \(5\) random seeds in \Cref{tab:low-dim-table,tab:FPE-table}.

For TENG, we use the randomized variant to solve the least-squares subproblems with $7$ iterations for the first stage (Euler and Heun) and $5$ iterations for the second stage (Heun), as suggested in \cite{pmlr-v235-chen24ad}.

\subsection{Fitting the initial condition}
For RDW we train with Adam for \(50{,}000\) iterations, while for Vlasov, linear advection, and Fokker--Planck we use Adam for \(100{,}000\) iterations, all using the mean-squared loss. In the Fokker--Planck case, we fit \(\log(u_0)\) using a mixture of uniform samples on the periodic box and Gaussian samples with importance weighting to define the loss function.

\subsection{Additional numerical experiments}
We add two robustness checks for the five-dimensional Fokker--Planck experiment.
First, we vary the relative tSVD truncation threshold $\epsilon_{\mathrm{rel}}$.
This threshold is a numerical resolution parameter for both DF and DFO: singular directions below it are not treated as stably resolved by the least-squares solve.
The relevant comparison is therefore whether DFO improves over DF over a reasonable range of truncation levels.
\Cref{tab:fpe-rcond-sweep} shows that tuned DFO gives lower error of the mean for all tested values $\epsilon_{\mathrm{rel}}\in[10^{-5},10^{-2}]$.

\begin{table}[h]
    \caption{Fokker--Planck robustness sweep over the relative tSVD truncation threshold. Entries are relative errors of the mean, averaged over five runs.}
    \label{tab:fpe-rcond-sweep}
    \centering
    \small
    \setlength{\tabcolsep}{10pt}
    \begin{tabular}{ccc}
        \toprule
        $\epsilon_{\mathrm{rel}}$ & DF + tSVD            & DFO                  \\
        \midrule
        $1.00\mathrm{e}{-5}$      & $7.75\mathrm{e}{-2}$ & $6.26\mathrm{e}{-2}$ \\
        $5.00\mathrm{e}{-5}$      & $4.73\mathrm{e}{-2}$ & $3.99\mathrm{e}{-3}$ \\
        $1.00\mathrm{e}{-4}$      & $1.46\mathrm{e}{-2}$ & $4.54\mathrm{e}{-3}$ \\
        $1.00\mathrm{e}{-3}$      & $1.11\mathrm{e}{-2}$ & $6.60\mathrm{e}{-3}$ \\
        $1.00\mathrm{e}{-2}$      & $2.46\mathrm{e}{-2}$ & $1.88\mathrm{e}{-2}$ \\
        \bottomrule
    \end{tabular}
\end{table}

Second, we vary the Onsager parameter $\lambda$, which scales the projected momentum correction, and keep $\beta=0.2$ fixed.
Since this correction is added to the DF reference velocity, we use order-one values and tune only within a modest range.
\Cref{tab:fpe-lambda-sweep} shows that the Fokker--Planck mean error is stable for $\lambda\in[0.5,2.5]$, so the method does not appear to require fine tuning of $\lambda$ in this example.

\begin{table}[h]
    \caption{Fokker--Planck robustness sweep over the Onsager parameter $\lambda$. Entries are relative errors of the mean, averaged over five runs.}
    \label{tab:fpe-lambda-sweep}
    \centering
    \small
    \setlength{\tabcolsep}{10pt}
    \begin{tabular}{cc}
        \toprule
        $\lambda$            & DFO                  \\
        \midrule
        $0.5$ & $1.36\mathrm{e}{-2}$ \\
        $0.75$ & $1.53\mathrm{e}{-2}$ \\
        $1$  & $2.20\mathrm{e}{-2}$ \\
        $1.25$  & $1.65\mathrm{e}{-2}$ \\
        $1.50$  & $7.76\mathrm{e}{-3}$ \\
        $1.75$  & $1.40\mathrm{e}{-2}$ \\
        $2.00$  & $8.53\mathrm{e}{-3}$ \\
        $2.25$  & $3.51\mathrm{e}{-2}$ \\
        $2.50$  & $7.75\mathrm{e}{-3}$ \\
        \bottomrule
    \end{tabular}
\end{table}

% ============================================================
\section{Randomization}\label{appendix:Random}

When the batch Jacobian \(J(\theta)\in\mathbb{R}^{N\times p}\) has rapidly decaying spectrum, we can approximate its dominant right singular subspace via a randomized SVD to reduce costs \cite{halko2011finding}.

\paragraph{Randomized SVD (RSVD).}
Let \(\Gamma\in\mathbb{R}^{p\times s}\) be a sketching matrix (e.g. Gaussian, with oversampling as needed).
Form
\[
    Y = J(\theta)\Gamma \in \mathbb{R}^{N\times s},
    \qquad
    Q = \mathrm{orth}(Y)\in\mathbb{R}^{N\times s}.
\]
Then compute the (small) SVD of the compressed matrix \(B=Q^\top J(\theta)\in\mathbb{R}^{s\times p}\),
\[
    B = \tilde U \Sigma V^\top,
\]
which yields an approximate right singular basis \(V\).
After truncation at tolerance \(\epsilon\) (tRSVD), we obtain \(V_\epsilon\) and define the approximate nullspace projector
\[
    \tilde P_\epsilon(\theta) z = z - V_\epsilon(V_\epsilon^\top z).
\]
The computational complexity of this scheme scales like $O(N s^2 + p s^2)$, which amounts to $O(N s^2)$ in our numerical experiments as $N \geq p$.
Plugging tRSVD into \Cref{alg:dfo} in place of the full tSVD computation yields the randomized version of the DFO scheme (RDFO) shown in \Cref{tab:FPE-table}.

% ============================================================
\section{RK4 time discretization of DFO}\label{appendix:TimeDisc}

This section describes the RK4 time discretization of the DFO dynamics used in the low-dimensional PDE experiments in \Cref{ssec:num-low-dim}.
Let the macro step size be \(h\) and the current state be \((\theta_k,m_k)\).
Define the RK4 stage substeps \(h_1=h_4=h\) and \(h_2=h_3=h/2\), and the intermediate states
\[
    \theta_1=\theta_k,\qquad
    \theta_2=\theta_k+\tfrac{h}{2}k_1,\qquad
    \theta_3=\theta_k+\tfrac{h}{2}k_2,\qquad
    \theta_4=\theta_k+h\,k_3.
\]
At each stage \(s=1,2,3,4\), we evaluate \(J_s=J(\theta_s)\) and \(f_s=f(\theta_s)\), and solve the least-squares subproblem using a tSVD with tolerance \(\epsilon\,\sigma_{\max}(J_s)\) (see \Cref{appendix:Numerics}). This yields (i) the minimal-norm Dirac-Frenkel velocity \(\bar\eta_s\), and (ii) the retained right-singular basis \(V_s\) associated with singular values above the truncation threshold.
Given \((\theta_s,m_{s-1})\), we then perform the stage updates
\[
    \Delta_{\mathrm{ls},s}=h_s\,\bar\eta_s,\qquad
    m_s=\beta\,m_{s-1}+ (1-\beta)\Delta_{\mathrm{ls},s},\qquad
    m^\perp_s= m_s - V_s(V_s^\top m_s),
\]
and define the stage direction
\[
    k_s=\bar\eta_s+\lambda \frac{m^\perp_s}{h_s}
    =\bar\eta_s+\lambda\,\frac{m_s - V_s(V_s^\top m_s)}{h_s}.
\]
Here \(\beta\in(0,1)\) controls the history (EMA) update and \(\lambda\ge 0\) scales the orthogonal momentum correction.
Finally, the RK4 update is
\[
    \theta_{k+1}=\theta_k+\frac{h}{6}\big(k_1+2k_2+2k_3+k_4\big),
    \qquad
    m_{k+1}=m_4.
\]

\end{document}